\documentclass{article}

\usepackage{bm}
\usepackage{microtype}
\usepackage{graphicx}
\usepackage{subcaption}
\usepackage{booktabs} 

\usepackage{hyperref}
\usepackage[table]{xcolor}

\usepackage[accepted]{icml2026}

\usepackage{amsmath}
\usepackage{amssymb}
\usepackage{mathtools}
\usepackage{amsthm}
\usepackage{enumitem}
\usepackage{algorithmic}
\usepackage{algorithm}
\renewcommand{\algorithmiccomment}[1]{\hfill $\triangleright$ #1}

\usepackage[capitalize,noabbrev]{cleveref}

\theoremstyle{plain}
\newtheorem{theorem}{Theorem}[section]

\newtheorem{lemma}[theorem]{Lemma}
\newtheorem{corollary}[theorem]{Corollary}
\theoremstyle{definition}
\newtheorem{definition}[theorem]{Definition}
\newtheorem{assumption}[theorem]{Assumption}
\theoremstyle{remark}

\usepackage[textsize=tiny]{todonotes}

\icmltitlerunning{DFSAttn: Dynamic Fine-grained Sparse Attention for Efficient Video Generation}

\begin{document}

\twocolumn[
  \icmltitle{DFSAttn: Dynamic Fine-grained Sparse Attention for Efficient Video Generation}



  \icmlsetsymbol{equal}{*}

  \begin{icmlauthorlist}
    \icmlauthor{Jie Hu}{pku}
    \icmlauthor{Zixiang Gao}{pku}
    \icmlauthor{Yutong He}{pku}
    \icmlauthor{Kun Yuan}{pku}
  \end{icmlauthorlist}

  \icmlaffiliation{pku}{Peking University}

  \icmlcorrespondingauthor{Kun Yuan}{kunyuan@pku.edu.cn}

  \icmlkeywords{Machine Learning, ICML}

  \vskip 0.3in
]



\printAffiliationsAndNotice{}  

\begin{abstract}
Diffusion transformers have achieved remarkable success in high-quality video generation, yet their reliance on spatiotemporal 3D full attention incurs prohibitive computational cost due to the quadratic complexity of attention. Block sparse attention is a common approach to mitigate this by focusing computation on important regions. However, attention maps in DiTs exhibit inherently dynamic and fine-grained sparsity, which causes existing block sparse attention methods to degrade significantly in quality, especially at high sparsity ratios. In this paper, we revisit block sparse attention and derive a theoretical lower bound on attention recall to characterize the key factors governing its effectiveness. Guided by these insights, we propose DFSAttn, a training-free sparse attention framework that enables dynamic, fine-grained sparsification efficiently. DFSAttn incorporates three core designs: Hilbert curve–based token reordering to achieve fine-grained sparsity while preserving efficient GPU execution, hierarchical block scoring for accurate block importance estimation, and sparse mask caching with adaptive ratios to balance accuracy and efficiency. Experimental results demonstrate that DFSAttn consistently outperforms prior methods under high sparsity, achieving up to 2.1$\times$ end-to-end speedup while maintaining high generation quality. Our code is open-sourced and available at \href{https://github.com/jessica-hujie/DFSAttn}{this link}.
\end{abstract}

\section{Introduction}

Diffusion Transformers (DiTs) \cite{peebles2023scalable} have achieved remarkable success in various applications. Recent state-of-the-art video DiTs \cite{yang2025cogvideox, zheng2024open, kong2024hunyuanvideo, wan2025wan} demonstrate impressive capability in synthesizing high-fidelity videos by adopting spatiotemporal 3D full attention. Despite its effectiveness, 3D full attention incurs prohibitive computational overhead due to the quadratic computational complexity of attention \cite{vaswani2017attention}. For example, generating a 129-frame 720p video with HunyuanVideo \cite{kong2024hunyuanvideo} requires approximately 30 minutes on an NVIDIA H100 PCIe GPU, severely limiting practical deployment.

Fortunately, attention maps exhibit inherent sparsity \cite{zhang2023h2o, xiao2024efficient}, where only a small subset of critical tokens dominate the output. Sparse attention methods exploit this property by constructing sparse masks to omit redundant computations. In particular, block sparse attention, which skips computations at the block level, aligns naturally with modern hardware and efficient kernels such as FlashAttention \cite{dao2022flashattention}, enabling high-performance attention computation.

\begin{figure*}[ht]
  \begin{center}
    \centerline{\includegraphics[width=1.8\columnwidth]{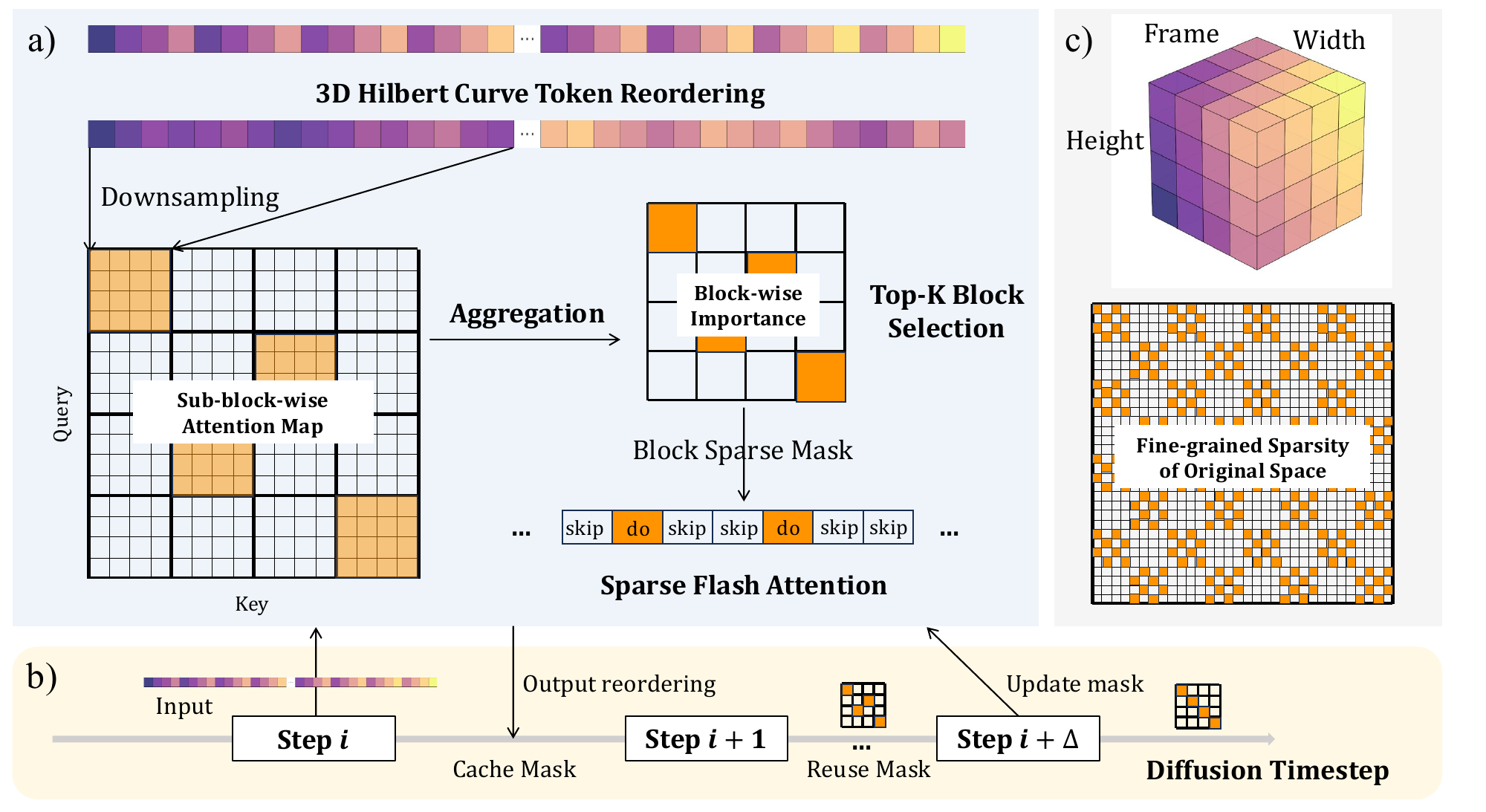}}
    \caption{Overview of DFSAttn: a) Video tokens are reordered using a 3D Hilbert curve, and sub-block attention scores are aggregated to estimate block-wise importance. The resulting masks are applied via sparse Flash Attention, while cross-attention remains dense to preserve text–video alignment. b) Sparse masks are cached and reused across diffusion timesteps. c) Structured block sparsity in the reordered sequence induces fine-grained sparsity of original spatiotemporal space.}
    \label{fig:dfsattn}
  \end{center}
  \vskip -0.1in
\end{figure*}

Existing training-free sparse attention methods for DiTs can be broadly categorized into static and dynamic approaches, depending on whether the block sparse mask is fixed offline or constructed on-the-fly during inference. Static methods \cite{yuan2024ditfastattn, xi2025sparse, zhang2025fast, chen2026sparse, zhao2026paroattention, li2026radial} rely on empirically designed mask patterns, while dynamic methods \cite{zhang2025spargeattn, xu2025xattention, zhang2026training, xia2025training, shen2025draftattention, yang2026sparse, liu2026fpsattention,liu2026mixture} aim to identify important blocks during inference. However, methods from both paradigms suffer from severe quality degradation under high sparsity. We argue that this limitation stems from a fundamental mismatch between GPU-efficient block-wise attention operations and the distinctive attention characteristics of DiTs, which are not adequately captured by existing designs.

Our first key observation is that attention maps in DiTs exhibit a dynamic and fine-grained sparsity pattern. As illustrated in \cref{fig:attnmap}, attention patterns vary significantly across layers and heads, with salient interactions scattered throughout the attention map. Consequently, static or coarse-grained sparse masks inevitably discard critical dependencies and lead to accuracy loss. Moreover, we observe that block sparse attention becomes progressively more effective as the diffusion process advances (\cref{fig:recall}), indicating that the effectiveness of sparsification is closely tied to the evolving statistical structure of the attention score matrix.

To systematically investigate the underlying factors of effective block sparse attention, we develop a token representation model and derive a theoretical lower bound on attention recall, which characterizes the fraction of important attention interactions preserved after sparsification. Our analysis identifies three key factors governing sparsification quality: the sparsity budget, the inter-block similarity gap, and the block-level semantic diversity.

Motivated by these insights, we introduce DFSAttn, a training-free dynamic fine-grained sparse attention method for accelerating video generation. DFSAttn incorporates three core designs, each directly addressing one of the identified factors. First, DFSAttn reorders video tokens using a 3D Hilbert curve, implicitly inducing fine-grained sparsity while aligning with GPU execution. This reordering preserves spatiotemporal locality in the 1D token sequence, thereby enlarging the inter-block similarity gap. Second, DFSAttn introduces a hierarchical block scoring mechanism that refines block importance estimation through sub-block aggregation, alleviating inaccuracies caused by mixed semantics within a block. Finally, DFSAttn adaptively reallocates the sparsity budget across diffusion steps and employs sparse mask caching, enabling a favorable balance between generation quality and efficiency.

We evaluate DFSAttn on state-of-the-art video diffusion models: HunyuanVideo \cite{kong2024hunyuanvideo}, and Wan2.1 \cite{wan2025wan}. Experimental results show that DFSAttn consistently outperforms existing work in generation quality under high sparsity. Specifically, DFSAttn achieves an end-to-end speedup of 2.1$\times$ at an 80\% sparsity ratio on HunyuanVideo, while maintaining high visual fidelity, reaching PSNR up to 29.38. Our contributions are as follows:
\begin{itemize}
    \item We identify the mismatch between block-wise sparse attention and DiTs' distinctive attention patterns, and derive a theoretical lower bound on attention recall to characterize key factors governing effective block sparse attention.
    \item We introduce DFSAttn, a training-free sparse attention framework that enables dynamic and fine-grained sparsification via token reordering, hierarchical block scoring, and adaptive sparse mask caching.
    \item DFSAttn significantly improves video generation quality under high sparsity and delivers substantial end-to-end speedups across video diffusion models.
\end{itemize}

\section{Related Work}

\subsection{Efficient video generation}
Numerous techniques have been developed to accelerate diffusion models. One major direction reduces inference cost by decreasing the number of sampling steps, through improved solvers or schedules such as DDIM \cite{song2020denoising} and DPM-Solver \cite{lu2022dpm}, or via distillation and consistency-based models \cite{salimans2022progressive, song2023consistency} that approximate the full diffusion trajectory with few steps. System-level approaches, such as DistriFusion \cite{li2024distrifusion}, parallelize diffusion inference across multiple devices. Another line of work exploits inter-timestep redundancy by caching and reusing intermediate activations or attention results, including PAB \cite{zhao2025real}, TeaCache \cite{liu2025timestep}, and AdaCache \cite{kahatapitiya2025adaptive}. These caching-based approaches reduce redundant computation across diffusion steps but do not modify the attention computation within each step. DFSAttn is orthogonal to these methods and can be combined with them for further acceleration.

\subsection{Sparse attention}
Sparse attention has been widely studied in large language models to alleviate the quadratic complexity of attention \cite{zhang2023h2o, xiao2024efficient,jiang2024minference, xu2025xattention, lai2025flexprefill, zhang2025spargeattn}. More recently, sparse attention mechanisms have been adapted to DiTs for video generation, typically operating at the block level for GPU efficiency. Existing methods can be broadly categorized into static and dynamic approaches. Static methods predefine sparsity patterns offline, such as spatial-temporal masks \cite{xi2025sparse}, 3D sliding windows \cite{zhang2025fast}, energy-decay-based radial sparsity \cite{li2026radial}. While efficient, these fixed patterns lack flexibility. Dynamic methods identify sparse masks on-the-fly during inference, aiming to select critical blocks. XAttention \cite{xu2025xattention} estimates block importance using the sum of antidiagonal values of attention scores, and SpargeAttn \cite{zhang2025spargeattn} employs a two-stage online filter based on block-wise mean values. SVG2 \cite{yang2026sparse} applies k-means clustering to tokens and leverages cluster centroids for selection. MOD-DiT~\cite{liu2026mixture} predicts dynamic sparse masks by modeling the mixture of attention distributions across denoising steps. However, existing methods suffer from severe quality degradation under high sparsity, as they rely on coarse block-level representations that do not fully capture the fine-grained sparsity patterns. Recent work FG-Attn \cite{durvasula2025fg} proposes a fine-grained sparse attention kernel for DiTs. In contrast, DFSAttn is a kernel-free approach that leverages fine-grained sparsity through token reordering and block-wise execution.

\begin{figure}[ht]
  \begin{center}
    \centerline{\includegraphics[width=0.9\columnwidth]{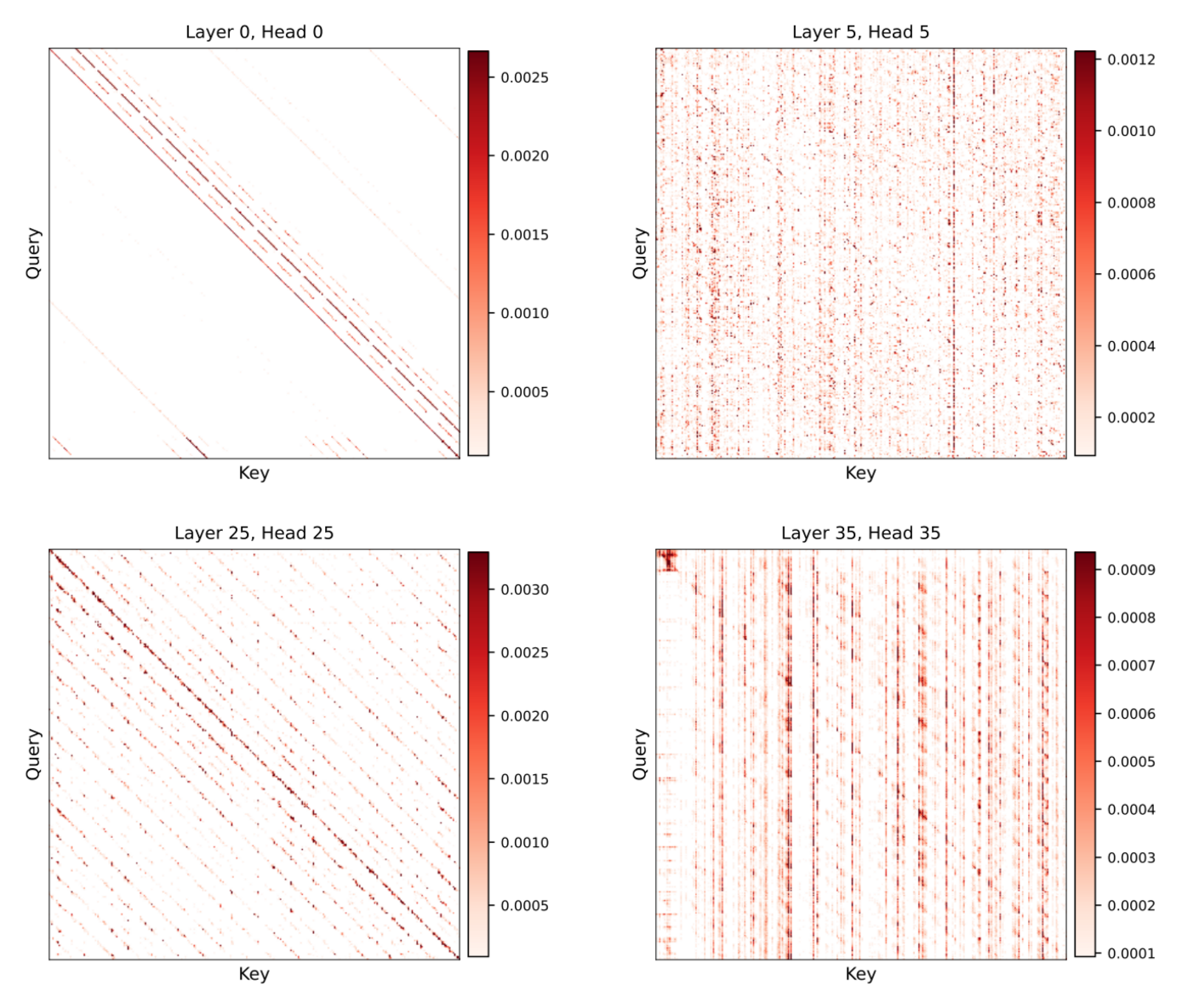}}
    \caption{3D full attention maps in DiTs exhibit dynamic and fine-grained sparsity patterns.}
    \label{fig:attnmap}
  \end{center}
  \vskip -0.2in
\end{figure}

\section{Preliminary}

\subsection{3D full attention in DiTs}

State-of-the-art video diffusion transformers \cite{kong2024hunyuanvideo,wan2025wan} process videos by first encoding a 3D video clip into a latent representation using a VAE. The resulting latent tensor has spatial--temporal dimensions $(f, h, w)$, corresponding to the number of frames, height, and width, respectively. This 3D latent is then flattened into a single token sequence before being fed into the transformer. Consequently, the input sequence length for each transformer block is $N = f \times h \times w$. For simplicity, we omit text-conditioning tokens here, as their length is typically negligible compared to video tokens.

Within each transformer block, DiTs employ 3D full attention, where each video token attends to all others across all dimensions. Formally, for a single attention head, let $Q, K, V \in \mathbb{R}^{N \times d}$ denote the query, key, and value, where $d$ is the head dimension. Attention is computed as
\begin{equation}\label{eq:attenion}
\begin{aligned}
    A = \mathrm{softmax}\!\left(\frac{QK^\top}{\sqrt{d}}\right), \quad
    O = AV,
\end{aligned}
\end{equation}
where $A \in \mathbb{R}^{N \times N}$ is the attention score matrix and $O$ is the output. While 3D full attention enables rich spatiotemporal interactions, its computational complexity scales quadratically with $N$. As a result, attention becomes the dominant bottleneck in video generation, especially for high-resolution or long-duration videos where $N$ is large.

\subsection{Block sparse attention}\label{sec:bkd_blksps}
Sparse attention methods reduce computational cost by applying a binary mask $\mathcal{M} \in \{0,1\}^{N \times N}$ to the attention matrix, yielding $\widetilde{A} = A \odot \mathcal{M}$. However, such element-wise sparsity is poorly aligned with the execution of modern GPU attention kernels. Efficient implementations such as FlashAttention \cite{dao2022flashattention} compute attention in a block-wise manner to reduce memory overhead. As a result, element-wise sparse attention often fails to deliver practical speedups.

Block sparse attention addresses this by partitioning the sequence into $M = N / B$ blocks of size $B$ (with $N$ padded to be divisible by $B$), and applying sparsity via a block mask $\mathcal{M} \in \{0,1\}^{M \times M}$. Dynamic block sparse attention methods construct this mask by estimating block importance without explicitly computing the full attention matrix. Prior approaches typically compute a block-wise attention score matrix $\hat{A}$ and derive $\mathcal{M}$ by ranking these scores. Specifically, queries and keys are grouped into blocks and form block-level representations $\hat Q = [\hat q_1, \dots, \hat q_M]$ and $\hat K = [\hat k_1, \dots, \hat k_M]$ using a method-dependent function. The block-wise attention score is then computed as
\begin{equation}\label{eq:blockscore}
    \hat A = \mathrm{softmax}\!\left(\frac{\hat Q \hat K^\top}{\sqrt{d}}\right).
\end{equation}

The matrix $\hat A$ serves as a proxy for the importance of interactions between query and key blocks. For each query block, only the key blocks with the highest scores in $\hat A$ are retained. To quantify sparsification quality, we measure the attention score recall
\begin{equation}
    \mathcal{R} = \frac{\lVert\widetilde{A}\rVert_1}{\lVert A\rVert_1},
\end{equation}
which captures the fraction of attention mass preserved after applying the sparse mask.

\begin{figure}[ht]
  \begin{center}
    \centerline{\includegraphics[width=0.8\columnwidth]{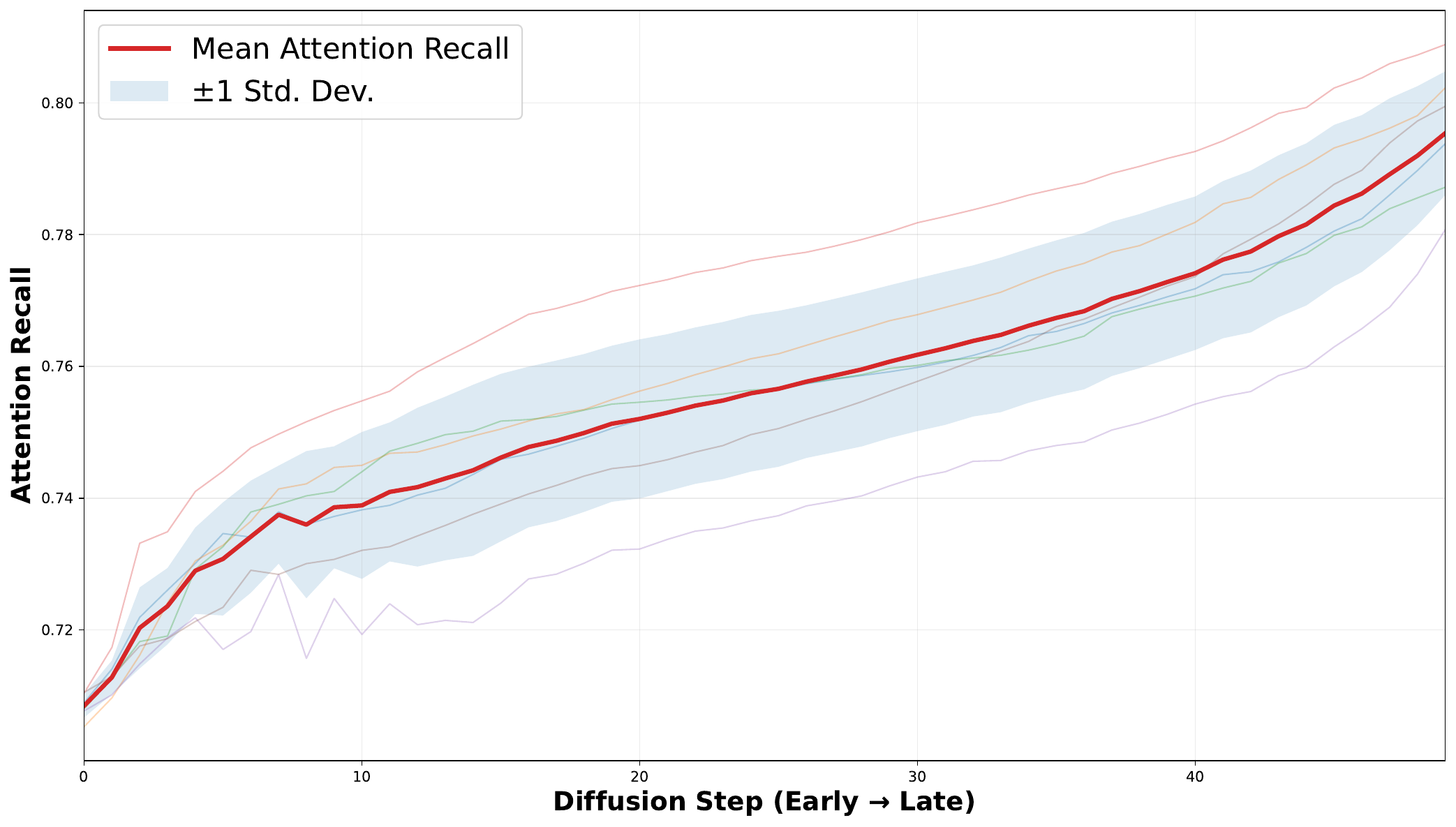}}
    \caption{The mean attention recall (solid line) rises monotonically across diffusion steps, with low variance across various samples (shaded region).}
    \label{fig:recall}
  \end{center}
  \vskip -0.2in
\end{figure}

\section{Motivation}\label{sec:motivation}

In this section, we identify the intrinsic sparsity patterns of attention in diffusion transformers and show that the effectiveness of block sparse attention is fundamentally governed by the statistical structure of attention scores. To formalize this connection, we introduce a token representation model and derive a theoretical lower bound on attention recall under block sparsity. Our analysis reveals three key factors that determine performance: the sparsity budget, the inter-block similarity gap, and the block-level semantic diversity.

\subsection{Experimental observation}\label{sec:mtv_obs}

To better exploit attention sparsity in DiTs, we begin with an empirical analysis of attention maps across layers and attention heads. As illustrated in \cref{fig:attnmap}, sparsity patterns vary substantially both across layers and among heads, indicating that static sparsity schemes may be suboptimal. This observation motivates the design of dynamic sparsity mechanisms that can adapt to various patterns.

We further study a typical block sparse attention scheme that constructs block-level representations via block-wise mean pooling and evaluate its attention score recall. As shown in \cref{fig:recall}, the recall, averaged across all layers and attention heads, consistently increases as the diffusion process proceeds. Notably, this improvement occurs with a fixed sparsity ratio, suggesting that block sparse attention becomes more effective at later denoising steps.

This trend can be attributed to the inherent dynamics of diffusion models. At early denoising steps, latent representations are dominated by noise, leading to diffuse attention distributions. As denoising proceeds, the latents gradually converge toward the data manifold, resulting in increasingly concentrated attention. Motivated by these observations, we develop a probabilistic model of attention in the following section to systematically analyze the factors governing the performance of block sparse attention.

\subsection{Theoretical lower bound}\label{sec:mtv_model}

We develop a token representation model to characterize when block sparse attention can reliably recover the dominant attention mass. The key intuition is that tokens within the same block tend to share underlying semantic components, which induce coherent block-level structure.

\begin{assumption}[\sc Token representation model]\label{ass:stsinblock}
Let $\mathcal{B}_u$ and $\mathcal{B}_v$ denote the $u$-th query block and $v$-th key block, respectively. Each query and key vector is decomposed as
\begin{equation}
    q_i = \bar q_u + \xi_u^{(q)} + \varepsilon_i^{(q)}, 
    \qquad
    k_j = \bar k_v + \xi_v^{(k)} + \varepsilon_j^{(k)},
\end{equation}
where $i \in \mathcal B_u$ and $j \in \mathcal B_v$. Here, $\bar q_u$ and $\bar k_v$ are deterministic block centroids, $\xi^{(\cdot)}$ denote block-level semantic drift shared across tokens within a block, and $\varepsilon^{(\cdot)}$ capture token-level perturbations. We assume $\xi^{(\cdot)} \sim \mathcal N(0,\tau^2 I)$ and $\varepsilon^{(\cdot)} \sim \mathcal N(0,\sigma^2 I)$, independent across tokens and blocks.
\end{assumption}

Under this model, we analyze the block sparse attention paradigm that relies on block-wise pooled representations.

\begin{definition}[\sc pooled block centroids]
    For block $\mathcal B_u$ and $\mathcal B_v$ of size $B$, we define the pooled query and key centroids as
    \begin{equation}
        \hat q_u:=\frac{1}{B}\sum_{i\in\mathcal B_u}q_i, \quad \hat k_v:=\frac{1}{B}\sum_{j\in\mathcal B_v}k_j
    \end{equation}
\end{definition}

For a fixed query block, the softmax normalization term is shared across all key blocks. Therefore, block selection can be approximated by ranking dot products between centroids.

\begin{definition}[\sc Approximate block score]
We define the block-level score used for Top-$K$ selection as
\begin{equation}
    \hat S_{uv} := \langle \hat q_u, \hat k_v \rangle,
\end{equation}
and denote the selected blocks as
\begin{equation}\label{eq:topk_dt}
    \hat{\mathcal T}_K := \arg\max_{v\in\{1,\dots,M\}}^{K} \hat S_{uv}.
\end{equation}
\end{definition}

To evaluate the accuracy of block selection, we compare $\hat{\mathcal T}_K$ with the oracle Top-$K$ blocks defined by exact attention mass. Specifically, Let
\begin{equation}\label{eq:def_truetopk}
    \mathcal T_K := \arg\max_{v\in\{1,\dots,M\}}^{K} \alpha_{uv},
    \quad
    \alpha_{uv} := \sum_{i\in\mathcal B_u}\sum_{j\in\mathcal B_v} A_{ij},
\end{equation}
where $A_{ij}$ denotes the full attention matrix.

\begin{theorem}[\sc Probability of correct block selection]\label{thm:topkprob}
    Let $$\Delta \mu_{\min} := \min_{v\in \mathcal{T}_K,v'\notin \mathcal{T}_K} \langle \bar q_u,\bar k_v-\bar k_{v'}\rangle$$ denote the minimum similarity gap between relevant and irrelevant block centroids, and  assume $\|\bar q_u\|^2,\|\bar k_v\|^2\leq C$. Then, there exists a constant $c$ such that
    \begin{equation}
    \mathbb P(\mathcal{T}_K = \hat{\mathcal{T}}_K) 
    \ge 1 - \sum_{v \in \mathcal{T}_K} \sum_{v' \notin \mathcal{T}_K}\left[e^{-\phi_1}+e^{-\phi_2}\right]
    \end{equation}
    where
    $$
        \phi_1= \frac{\Delta \mu_{\min}^2}{48C\delta^2}, \quad \phi_2= c\min\left(\frac{\Delta \mu_{\min}^2}{\delta^4 d}, \frac{\Delta \mu_{\min}}{\delta^2}\right)
    $$
    and $\delta^2=\tau^2+\sigma^2/B$.
\end{theorem}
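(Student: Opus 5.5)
The plan is a union bound over pairwise misorderings followed by a Gaussian concentration argument for each pair. The event $\{\mathcal T_K = \hat{\mathcal T}_K\}$ contains the event that $\hat S_{uv} > \hat S_{uv'}$ for every $v\in\mathcal T_K$, $v'\notin\mathcal T_K$. Hence
\[
\mathbb P(\mathcal T_K\neq\hat{\mathcal T}_K)\le\sum_{v\in\mathcal T_K}\sum_{v'\notin\mathcal T_K}\mathbb P\bigl(\hat S_{uv}-\hat S_{uv'}\le 0\bigr),
\]
and it suffices to bound each pairwise term by $e^{-\phi_1}+e^{-\phi_2}$.

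First I would use \cref{ass:stsinblock} to write the pooled centroids as $\hat q_u=\bar q_u+g_u$ and $\hat k_v=\bar k_v+h_v$. Here $g_u=\xi^{(q)}_u+\frac1B\sum_{i\in\mathcal B_u}\varepsilon_i^{(q)}\sim\mathcal N(0,\delta^2 I)$ with $\delta^2=\tau^2+\sigma^2/B$, and similarly $h_v$, all mutually independent. Expanding the score difference gives
\[
\hat S_{uv}-\hat S_{uv'}=\langle\bar q_u,\bar k_v-\bar k_{v'}\rangle+L+Q,
\]
where
\[
L=\langle\bar q_u,h_v-h_{v'}\rangle+\langle g_u,\bar k_v-\bar k_{v'}\rangle,
\qquad
Q=\langle g_u,h_v-h_{v'}\rangle.
\]
The deterministic part is at least $\Delta\mu_{\min}$. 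I would take as the working identification that the oracle $\mathcal T_K$ (defined by $\alpha_{uv}$) is ordered consistently with the centroid similarities, which is exactly what the definition of $\Delta\mu_{\min}$ implicitly uses; its sign must be positive for the bound to be meaningful. A misordering then requires $L\le-\Delta\mu_{\min}/2$ or $Q\le-\Delta\mu_{\min}/2$.

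\textbf{Linear term.} $L$ is a centered Gaussian with variance
\[
\delta^2\bigl(2\|\bar q_u\|^2+\|\bar k_v-\bar k_{v'}\|^2\bigr)\le\delta^2(2C+4C)=6C\delta^2,
\]
using $\|\bar k_v-\bar k_{v'}\|^2\le 2\|\bar k_v\|^2+2\|\bar k_{v'}\|^2\le 4C$. The Gaussian tail then gives
\[
\mathbb P\bigl(L\le-\Delta\mu_{\min}/2\bigr)\le\exp\!\left(-\frac{(\Delta\mu_{\min}/2)^2}{2\cdot 6C\delta^2}\right)=e^{-\Delta\mu_{\min}^2/(48C\delta^2)}=e^{-\phi_1},
\]
which matches the stated constant exactly and confirms the split at $\Delta\mu_{\min}/2$.

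\textbf{Bilinear term.} This is the main obstacle, since $Q=\langle g_u,h_v-h_{v'}\rangle$ is a sum of $d$ products of independent Gaussians rather than a Gaussian. Write $w=h_v-h_{v'}\sim\mathcal N(0,2\delta^2 I)$, independent of $g_u$. Each coordinate product $g_{u,\ell}w_\ell$ is centered sub-exponential with $\psi_1$-norm $O(\delta^2)$. Bernstein's inequality for sums of independent sub-exponential variables then gives
\[
\mathbb P\bigl(Q\le-t\bigr)\le\exp\!\left(-c\min\!\left(\frac{t^2}{d\delta^4},\frac{t}{\delta^2}\right)\right)
\]
with an absolute constant $c$. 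An equivalent route is Hanson--Wright applied to the Gaussian chaos $Q$, viewed as a quadratic form in the joint vector $(g_u,w)$ with an off-diagonal identity block, whose Frobenius norm is $\sqrt d$ and operator norm is $1$.

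Setting $t=\Delta\mu_{\min}/2$ and absorbing the factors of $2$ into $c$ yields $e^{-\phi_2}$. The care needed here is only in tracking that the variance proxy scales as $\delta^2\cdot\delta^2 d$, which is what produces $\delta^4 d$. Combining the two tails in each pairwise term and summing over all pairs gives the theorem.
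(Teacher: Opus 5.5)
Your proposal is correct and follows the paper's proof essentially step for step. It uses the same union bound over pairs $(v,v')$, the same linear/bilinear decomposition split at half the gap, the same Gaussian variance $\delta^2\bigl(2\|\bar q_u\|^2+\|\bar k_v-\bar k_{v'}\|^2\bigr)\le 6C\delta^2$ giving $\phi_1$, and a Hanson--Wright bound for the bilinear term giving $\phi_2$; your coordinatewise sub-exponential Bernstein argument is an equivalent substitute. Like the paper, you treat the oracle set $\mathcal T_K$ as fixed, and you rightly flag this as an implicit assumption, since $\mathcal T_K$ is defined through the random masses $\alpha_{uv}$.
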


The bound highlights that correct block recovery is primarily determined by the minimum centroid similarity gap. Larger token-level noise $\sigma^2$ and block-level semantic drift $\tau^2$ degrade selection reliability. Finally, we relate block selection accuracy to attention recall.

\begin{corollary}[\sc Expected attention recall]\label{crl:cmbrecall}
Let $\gamma = K/M$ denote the sparsity budget. The expected attention recall for query block $\mathcal B_u$ satisfies
\begin{equation}
    \mathbb E[\mathcal R_u]
    \ge
    \gamma \cdot \mathbb P(\hat{\mathcal T}_K = \mathcal T_K).
\end{equation}
\end{corollary}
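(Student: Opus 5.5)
We need to prove $\mathbb E[\mathcal R_u] \ge \gamma\, \mathbb P(\hat{\mathcal T}_K=\mathcal T_K)$. The plan is to split the expectation on the event $E:=\{\hat{\mathcal T}_K=\mathcal T_K\}$, discard the complementary event using nonnegativity of recall, and then show that on $E$ the recall is at least $\gamma$ deterministically. Here the recall of query block $\mathcal B_u$ is
$$\mathcal R_u = \frac{\sum_{v\in\hat{\mathcal T}_K}\alpha_{uv}}{\sum_{v=1}^M\alpha_{uv}},$$
and since attention rows sum to one, the denominator equals $B$.

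Concretely, write $\mathbb E[\mathcal R_u]=\mathbb E[\mathcal R_u\mathbf 1_E]+\mathbb E[\mathcal R_u\mathbf 1_{E^c}]$. The second term is nonnegative because all $A_{ij}\ge 0$, so it can be dropped, giving $\mathbb E[\mathcal R_u]\ge \mathbb E[\mathcal R_u\mathbf 1_E]$. On $E$ we have $\mathcal R_u=\sum_{v\in\mathcal T_K}\alpha_{uv}/\sum_v\alpha_{uv}$, which is the fraction of block mass captured by the oracle Top-$K$ blocks.

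The key deterministic step is an averaging argument. Sort $\alpha_{u(1)}\ge\dots\ge\alpha_{u(M)}$. The mean of the top $K$ values is at least the mean of all $M$ values, because $\frac1K\sum_{r\le K}\alpha_{u(r)}\ge\alpha_{u(K)}\ge\alpha_{u(r')}$ for every $r'>K$. Hence
$$\sum_{v\in\mathcal T_K}\alpha_{uv}\ge \frac KM\sum_{v}\alpha_{uv}=\gamma\sum_v\alpha_{uv}.$$
Equivalently, $M\sum_{\text{top}}\ge K\sum_{\text{all}}$, which one sees by writing $\sum_{\text{all}}=\sum_{\text{top}}+\sum_{\text{rest}}$ and noting $\sum_{\text{rest}}\le (M-K)\alpha_{u(K)}\le \frac{M-K}{K}\sum_{\text{top}}$. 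Therefore $\mathcal R_u\mathbf 1_E\ge\gamma\mathbf 1_E$, and taking expectations gives $\mathbb E[\mathcal R_u]\ge\gamma\,\mathbb P(E)$. One can then substitute Theorem~\ref{thm:topkprob} to obtain an explicit bound.

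The only delicate point is measure-theoretic bookkeeping rather than any real analytic difficulty. Under Assumption~\ref{ass:stsinblock}, $\mathcal T_K$ itself depends on the random tokens through $A$. Theorem~\ref{thm:topkprob}, however, treats $\mathcal T_K$ as determined by the centroids through $\Delta\mu_{\min}$. I would note that the averaging inequality holds pointwise for every realization, so it applies regardless of whether $\mathcal T_K$ is random or fixed. Ties in the Top-$K$ ranking do not affect the argument, since any tie-breaking still selects $K$ largest values.
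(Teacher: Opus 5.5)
Your proof is correct and follows the paper's argument: you discard the complement of $\{\hat{\mathcal T}_K=\mathcal T_K\}$ by nonnegativity, then use that the oracle Top-$K$ blocks capture at least a $K/M$ fraction of the block mass. You also supply the averaging argument for that last inequality, which the paper only asserts, and you argue pointwise via $\mathcal R_u\mathbf 1_E\ge\gamma\mathbf 1_E$ rather than through a conditional expectation, which avoids the paper's implicit treatment of $\mathcal T_K$ as deterministic.
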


\subsection{Determinants of effective block sparse attention}\label{sec:mtv_analysis}
From Corollary \ref{crl:cmbrecall}, the key factors affecting the performance of block sparse attention are the sparsity budget, the minimum inter-block similarity gap, the token-level perturbations, and the block-level drift. The derived lower bound of attention recall motivates us to propose an optimized block sparse attention through the following aspects: enlarging the inter-block similarity gap, reducing the block semantic diversity, and reallocating the sparsity budget.

\begin{algorithm}[tb]
\caption{DFSAttn}
\label{alg:dfsattn}
\begin{algorithmic}
\STATE {\bfseries Input:} $Q, K, V \in \mathbb{R}^{N \times d}$, block size $B$, sub-block size $B_s$, permutation $\mathcal{P}$, sparsity budget $\gamma_t$, cached mask $\mathcal{M}$, mask update interval $\Delta$, diffusion step $t$

\vspace{0.2em}
\STATE {\bfseries Step 1: 3D Hilbert Curve Token Reordering}
\STATE $Q^{'}\leftarrow \mathcal{P}(Q),K^{'}\leftarrow \mathcal{P}(K),V^{'}\leftarrow \mathcal{P}(V)$

\vspace{0.2em}
\STATE {\bfseries Step 2: Hierarchical Block Scoring}
\IF{$t \equiv 0 \pmod{\Delta}$}
    \vspace{0.1em}
    \STATE $\hat{Q} \leftarrow \text{AvgPool}(Q^{'}, B_s),\;
           \hat{K} \leftarrow \text{AvgPool}(K^{'}, B_s)$
    \vspace{0.1em}
    \STATE $\hat{A} \leftarrow \mathrm{softmax}(\hat{Q}\hat{K}^T / \sqrt{d})$ 
    \vspace{0.1em}
    \STATE $\hat S_{uv} \leftarrow \text{Aggregation}(\hat{A}, B, B_s)$ \COMMENT{\cref{eq:alg_aggregate}}
    \vspace{0.1em}
    \STATE $\mathcal{M}_t \leftarrow \text{TopK-Selection}(\hat S, \gamma_t)$ \COMMENT{\cref{eq:alg_select}}
    \vspace{0.1em}
    \STATE $\mathcal{M} \leftarrow \mathcal{M}_t$ \COMMENT{Update cached mask}
\ELSE
    \STATE $\mathcal{M}_t \leftarrow \mathcal{M}$ \COMMENT{Reuse cached mask}
\ENDIF

\vspace{0.2em}
\STATE {\bfseries Step 3: Block Sparse Attention}
\STATE $O^{'} \leftarrow \text{SparseFlashAttn}(Q^{'}, K^{'}, V^{'}, \mathcal{M}_t)$
\vspace{0.1em}
\STATE $O\leftarrow \mathcal{P}^{-1}(O^{'})$ \COMMENT{Restore original order}

\vspace{0.2em}
\STATE {\bfseries Output:} $O$
\end{algorithmic}
\end{algorithm}

\section{Method}\label{sec:method}

In this section, we introduce DFSAttn, a training-free dynamic sparse attention method for accelerating video generation in diffusion transformers. As illustrated in \cref{fig:dfsattn}, DFSAttn comprises three key strategies that jointly improve efficiency without compromising generation quality. First, we reorder video tokens using a 3D Hilbert curve, implicitly inducing fine-grained sparsity while preserving efficient GPU execution(\cref{sec:alg_hilbert}). Second, we introduce hierarchical block score estimation, which refines block-level importance approximation via fine-grained sub-block aggregation (\cref{sec:alg_subscore}). Third, we cache and reuse sparse masks across diffusion steps with an adaptive sparsity budget, further reducing the computational overhead (\cref{sec:alg_maskcache}).

\begin{figure}[ht]
  \begin{center}
    \centerline{\includegraphics[width=0.7\columnwidth]{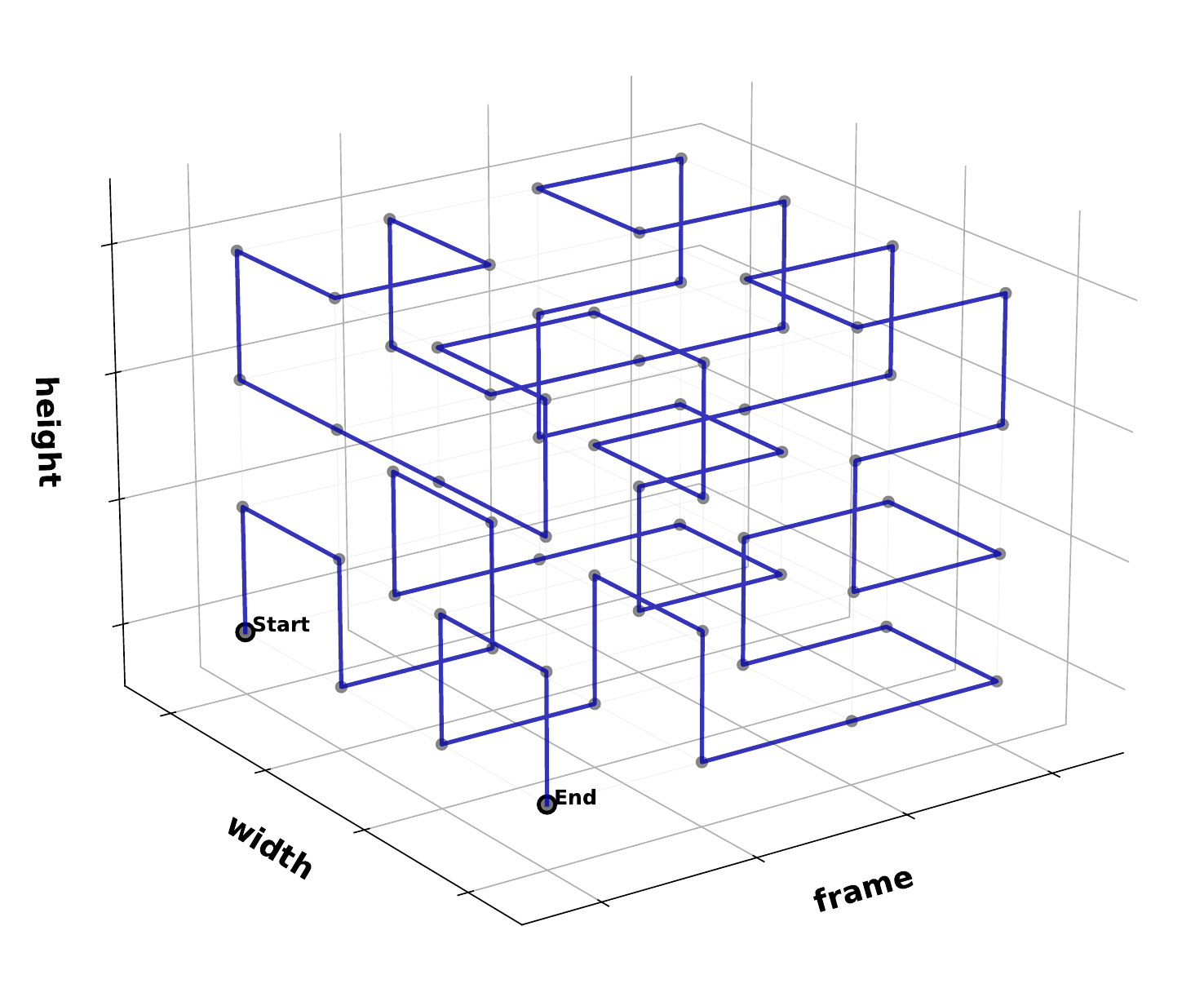}}
    \caption{The 3D Hilbert curve in $4\times4\times4$ space.}
    \label{fig:hilbert}
  \end{center}
  \vskip -0.2in
\end{figure}

\subsection{3D Hilbert curve token reordering}\label{sec:alg_hilbert}

As analyzed in \cref{sec:mtv_analysis}, enlarging the inter-block similarity gap is critical for effective block sparse attention. Equivalently, tokens assigned to the same block should be as semantically coherent as possible. In video data, semantic relevance is strongly correlated with spatial–temporal proximity: tokens corresponding to nearby pixels across adjacent frames tend to share similar semantics. However, DiTs flatten 3D video tokens to a 1D sequence using per-frame row-major ordering, which disrupts the locality. Consequently, tokens that are spatially or temporally adjacent in the original video may be far apart in the sequence, leading to scattered attention patterns, as observed in \cref{fig:attnmap}. 

To address this mismatch, we reorder video tokens using a 3D Hilbert space-filling curve \cite{hilbert_ueber_1891}. The Hilbert curve is a continuous fractal curve with strong locality-preserving properties and has been widely adopted in vision models \cite{li2025hilbert, zheng2025hilberta}. While Jenga \cite{zhang2026training} introduces Hilbert reordering applied recursively within each block, our approach performs a global reordering directly over the entire video tensor without any block-level decomposition. As illustrated in \cref{fig:hilbert}, this mapping preserves spatial–temporal neighborhoods when projecting 3D video tokens to a 1D sequence.

Formally, as detailed in \cref{alg:dfsattn}, we define the reordering as a token permutation $\mathcal{P}$. Tokens are reordered at the beginning of each transformer block, block sparse attention is applied to the reordered sequence, and the inverse permutation $\mathcal{P}^{-1}$ is applied to the attention output. This design ensures that spatial–temporal neighboring tokens-typically those with high semantic relevance—remain adjacent in the 1D sequence. Consequently, each block in the reordered sequence corresponds to a coherent video region, while different blocks tend to capture distinct regions, naturally enlarging the inter-block similarity gap. Empirically, we further quantify the effect of reordering by measuring the average intra-block variance of queries and keys before and after applying 3D Hilbert reordering. As reported in Table~\ref{tab:variance}, the proposed reordering strategy reduces intra-block variance by approximately 20\% for both queries and keys, which is consistent with the intuition. 

\begin{table}[t]
  \caption{Reordering reduces the intra-block variance of queries and keys.}
  \label{tab:variance}
  \begin{center}
    \begin{small}
        \begin{tabular}{lcc}
          \toprule
           &  $\text{Var}(q)$ & $\text{Var}(k)$ \\
          \midrule
           w/o reordering  &  1.22 & 1.25 \\
          w reordering &  0.98 & 1.02  \\
          \bottomrule
        \end{tabular}
    \end{small}
  \end{center}
  \vskip -0.1in
\end{table}

Critically, this reordering induces fine-grained sparsity while retaining GPU-friendly block-sparse operations. Although prior work such as SpargeAttn \cite{zhang2025spargeattn} employs the Hilbert curve for sparse mask identification, it does not alter the contiguous block partitioning of the original sequence. In contrast, as illustrated in \cref{fig:dfsattn}, block-level sparsity applied to the reordered sequence translates into a substantially finer-grained sparsity pattern in the original spatiotemporal space.

\begin{table*}[t]
\centering
\caption{Quality and efficiency results of DFSAttn and other baselines.}
\label{tab:vbench_results}
\resizebox{0.9\textwidth}{!}{
\begin{tabular}{l|ccc|ccccc|ccc}
\toprule
\textbf{Method}
& \multicolumn{8}{c|}{\textbf{Quality}} 
& \multicolumn{3}{c}{\textbf{Efficiency}} \\
\cmidrule(lr){2-9} \cmidrule(lr){10-12}
& PSNR $\uparrow$ & SSIM $\uparrow$ & LPIPS $\downarrow$ 
& AQ $\uparrow$ & BC $\uparrow$ & IQ $\uparrow$ & MS $\uparrow$ & SC $\uparrow$ & Sparsity $\uparrow$ & Latency $\downarrow$ & Speedup $\uparrow$ \\
\midrule

\textbf{Wan2.1} 
& -- & -- & -- 
& 59.77 & 95.09 & 65.49 & 98.72 & 94.35
& 0\% & 1884s & 1.00$\times$ \\
\cmidrule{1-12}

Radial
& 17.405 & 0.624 & 0.357 
& 59.29 & 95.23 & 64.27 & 98.79 & 94.01
& 73.78\% & 1098s & 1.72$\times$ \\

SVG 
& 17.393 & 0.612 & 0.362 
& 57.92 & 95.42 & 65.24 & 98.85 & 93.79
& 65.71\% & 1079s & 1.75$\times$ \\

SVG2 
& 18.034 & 0.640 & 0.338 
& 57.71 & 95.20 & 63.28 & 98.78 & 93.75
& 68.19\% & 992s & 1.90$\times$ \\

\cellcolor{blue!8}\textbf{Ours} 
& \cellcolor{blue!8}\textbf{22.370} & \cellcolor{blue!8}\textbf{0.764} & \cellcolor{blue!8}\textbf{0.183} 
& \cellcolor{blue!8}58.61 & \cellcolor{blue!8}94.86 & \cellcolor{blue!8}64.97 & \cellcolor{blue!8}98.72 & \cellcolor{blue!8}93.91
& \cellcolor{blue!8}\textbf{78.51\%} & \cellcolor{blue!8}1050s & \cellcolor{blue!8}\textbf{1.79}$\times$ \\

\midrule

\textbf{Hunyuan} 
& -- & -- & -- 
& 57.75 & 96.75 & 65.95 & 99.34 & 96.87
& 0\% & 1752s & 1.00$\times$ \\
\cmidrule{1-12}

Radial
& 20.897 & 0.750 & 0.285 
& 57.21 & 97.01 & 67.16 & 99.39 & 96.72
& 72.67\% & 1006s & 1.74$\times$ \\

SVG 
& 26.825 & 0.853 & 0.141 
& 56.86 & 97.10 & 65.89 & 99.41 & 96.64 
& 75.24\% & 911s & 1.92$\times$ \\

SVG2 
& 28.577 & 0.864 & 0.139 
& 55.10 & 96.77 & 63.13 & 99.36 & 96.42
& 71.88\% & 796s & 2.20$\times$ \\

\cellcolor{blue!8}\textbf{Ours} 
& \cellcolor{blue!8}\textbf{29.381} &\cellcolor{blue!8} \textbf{0.898} & \cellcolor{blue!8}\textbf{0.087} 
& \cellcolor{blue!8}57.18 &\cellcolor{blue!8} 96.47 & \cellcolor{blue!8}66.06 & \cellcolor{blue!8}99.32 & \cellcolor{blue!8}96.73
& \cellcolor{blue!8}\textbf{80.53\%} & \cellcolor{blue!8}836s & \cellcolor{blue!8}\textbf{2.10}$\times$ \\

\bottomrule
\end{tabular}
}
\end{table*}

\subsection{Hierarchical block scoring}\label{sec:alg_subscore}

As shown in \cref{sec:mtv_analysis}, reducing block-level semantic drift improves the lower bound on attention recall, highlighting the importance of accurate block-level representations in block sparse attention. However, existing methods typically construct block representations via coarse pooling, which implicitly assumes semantic homogeneity within each block.  When a block contains multiple semantic centroids, pooling forces all tokens to be represented by a single vector, leading to inaccurate estimation of block-level importance.

To address this limitation, we propose a hierarchical block scoring strategy that aggregates fine-grained importance from sub-blocks. As detailed in \cref{alg:dfsattn}, each block is first partitioned into smaller sub-blocks, within which semantic variation is more locally coherent. We then compute sub-block-wise attention scores and aggregate them to form the block-level score. Formally, for a given attention head, let $\hat{A}$ denote the sub-block-wise attention score, $\{i^{'}, j^{'}\}$ index sub-blocks within the query and key blocks. The block-level importance score is defined as
\begin{equation}\label{eq:alg_aggregate}
    \hat S_{uv} =\sum_{i^{'}\in\mathcal B_u}\sum_{j^{'}\in\mathcal B_v} \hat{A}_{i^{'}j^{'}}
\end{equation}

By performing aggregation over smaller sub-blocks, hierarchical block scoring yields a more faithful approximation of token-level attention interactions, leading to more reliable block ranking. For each query block, we rank key blocks by $\hat S_{uv}$ and select the most critical ones. Specifically, the selected block set $\mathcal{I}_u$ and the corresponding sparse mask are given by

\begin{equation}\label{eq:alg_select}
    \mathcal{I}_u= \arg\max_{v \in \{1, \dots, M\}}^{\gamma M} \hat S_{uv}, \quad
    \mathcal{M}\left[u,\mathcal{I}_u\right]=1.
\end{equation}

Finally, DFSAttn applies Flash Attention only to the block pairs specified by the sparse mask, significantly reducing the computational overhead of attention.

\subsection{Sparse mask caching with adaptive ratio}\label{sec:alg_maskcache}

As illustrated in \cref{fig:recall}, under a fixed sparsity ratio, block sparse attention becomes increasingly effective as the diffusion process advances. Equivalently, achieving comparable accuracy at earlier diffusion steps requires a larger sparsity budget than at later steps. Leveraging this insight, we reallocate the overall computational budget across diffusion steps by adopting an adaptive sparsity schedule, where the sparsity budget decreases monotonically over time.

To further improve the accuracy-efficiency trade-off, we introduce a sparse mask caching strategy inspired by prior work on caching intermediate features in DiTs \cite{zhao2025real, liu2025timestep}. As detailed in \cref{alg:dfsattn}, DFSAttn identifies block sparse attention masks for each layer and head in the initial iteration and subsequently updates them at a fixed interval. Importantly, while the sparse masks are cached and reused across diffusion steps, the corresponding sparse attention outputs are recomputed at every step, ensuring that the dynamic evolution of token representations is fully preserved.

By jointly applying budget reallocation and sparse mask caching, DFSAttn achieves higher generation quality while substantially accelerating inference, demonstrating a favorable balance between accuracy and efficiency.

\begin{figure*}[ht]
  \begin{center}
    \centerline{\includegraphics[width=1.9\columnwidth]{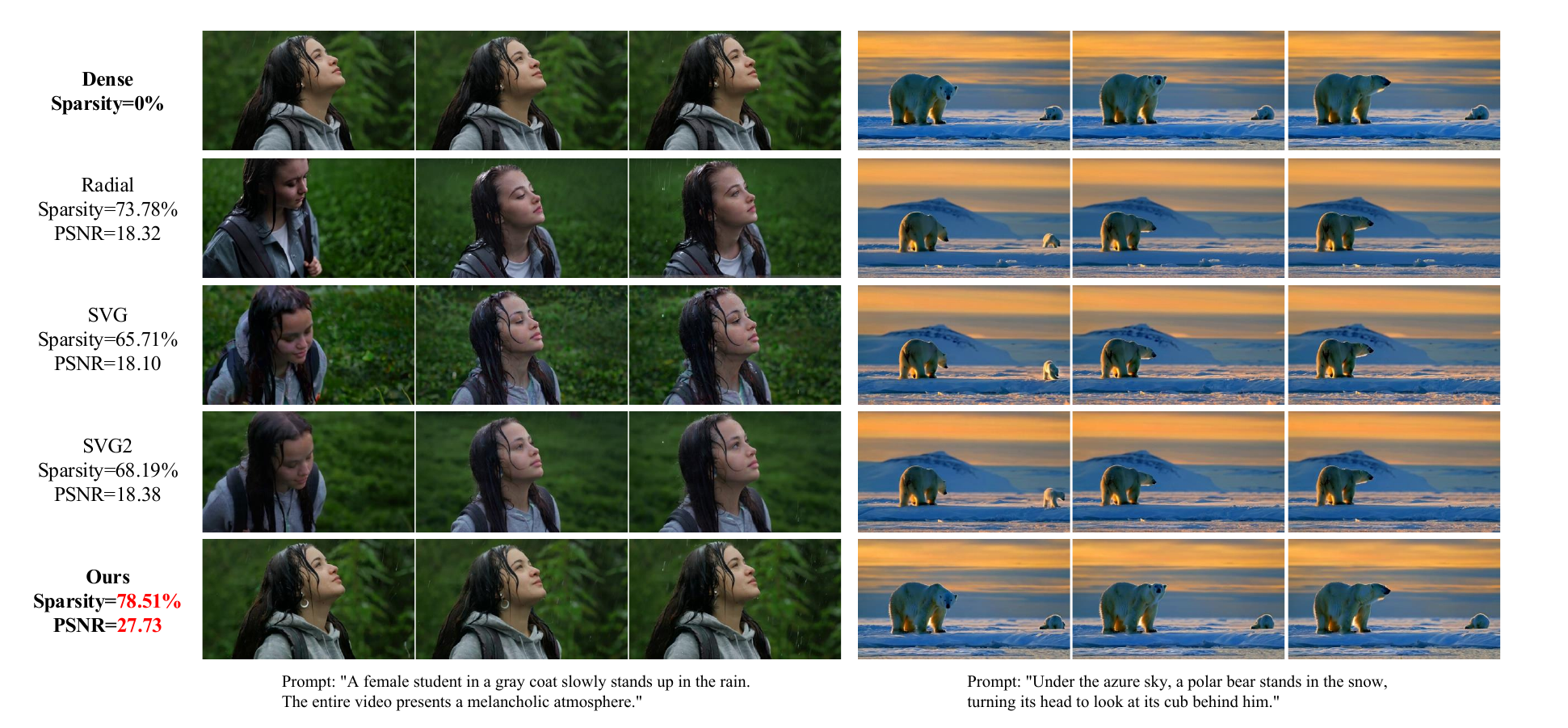}}
    \caption{Examples of videos generated by DFSAttn and other baselines on Wan2.1-T2V-14B.}
    \label{fig:video_example}
  \end{center}
  \vskip -0.2in
\end{figure*}

\section{Experiment}

\subsection{Setup}

\textbf{Models and datasets.} We evaluate DFSAttn on two state-of-the-art text-to-video diffusion models: HunyuanVideo-T2V-13B \cite{kong2024hunyuanvideo} and Wan2.1-T2V-14B \cite{wan2025wan}, generating 129-frame and 81-frame videos at 720p resolution, respectively. All experiments use text prompts from the Penguin Benchmark \cite{kong2024hunyuanvideo}.

\textbf{Metrics.}  We measure the fidelity of sparse attention outputs relative to full attention using PSNR, SSIM, and LPIPS. In addition, we assess the video quality with VBench \cite{huang2024vbench}, reporting aesthetic quality (AQ), background consistency (BC), imaging quality (IQ), motion smoothness (MS) and subject consistency (SC). The efficiency of sparse attention methods is quantified by sparsity, defined as the fraction of attention computations eliminated compared to full attention.

\textbf{Baselines.} We compare DFSAttn with state-of-the-art sparse attention methods, including static approaches SVG \cite{xi2025sparse} and RadialAttention \cite{li2026radial}, as well as the dynamic method SVG2 \cite{yang2026sparse}.

\textbf{Implementation details.} We employ the Block-Sparse Attention kernel from \cite{guo2024blocksparse} and benchmark DFSAttn on an NVIDIA H100 GPU with CUDA 12.4. The default attention backend is FlashAttention-2 \cite{dao2024flashattention}. Following prior work \cite{li2026radial, xi2025sparse}, sparse attention is bypassed during the first 25\% of the denoising steps for all methods. Our adaptive sparsity budget is initialized at 0.3 and decreased by 0.1 every subsequent 25\% of denoising steps, resulting in an average sparsity level of approximately 80\% over the remaining steps.  We use a block size of 128 and a sub-block size of 16. Since RadialAttention \cite{li2026radial} does not natively support 720p resolution, we pad video tokens to apply its attention mask. All baseline methods are evaluated using their official codebase, as detailed in Appendix~\ref{app:imp_dtl}.

\subsection{Quality and efficiency results}

As shown in \cref{tab:vbench_results}, DFSAttn consistently outperforms all baseline methods across similarity metrics: PSNR, SSIM, and LPIPS, even at higher sparsity levels. Notably, at approximately 80\% sparsity, DFSAttn achieves an average PSNR of 22.37 on Wan2.1 and 29.38 on HunyuanVideo, demonstrating its ability to preserve fidelity under extreme compression. Corresponding qualitative results in \cref{fig:video_example} further illustrate that DFSAttn generates videos highly consistent with full attention, preserving fine-grained details and temporal consistency compared to baselines. On the VBench benchmark, DFSAttn closely matches full attention across all evaluation metrics, highlighting its ability to maintain overall video quality. Additional video samples are provided in Appendix~\ref{app:video_example}.

In terms of inference efficiency, DFSAttn delivers 2.1$\times$ and 1.8$\times$ end-to-end speedups on HunyuanVideo and Wan 2.1, respectively, outperforming RadialAttention and SVG. Although SVG2 achieves slightly higher speedups with customized kernels, it suffers from a substantial degradation in visual quality. For example, DFSAttn achieves imaging quality scores of 66.06 and 64.97 on HunyuanVideo and Wan2.1, whereas SVG2 attains only 63.13 and 63.28, respectively. Additionally, we evaluate the scalability of DFSAttn under resolutions and frames in Appendix~\ref{app:scalability}. These results demonstrate that DFSAttn consistently maintains high fidelity under higher sparsity while delivering substantial acceleration, effectively balancing efficiency and quality.

\begin{table}[t]
  \caption{Ablation of token reordering on HunyuanVideo.}
  \label{tab:order-ablation}
  \begin{center}
    \begin{small}
        \begin{tabular}{lccc}
          \toprule
          Method     & PSNR $\uparrow$ & SSIM $\uparrow$ & LPIPS $\downarrow$ \\
          \midrule
          Raster	& 27.794 & 0.874 & 0.124 \\
          Hilbert2D &	29.265 &	0.893 & 0.090 \\
          Block3D &	29.156	& 0.897 &	0.090 \\
          Hilbert3D   & \textbf{29.378} & \textbf{0.901} & \textbf{0.087} \\
          \bottomrule
        \end{tabular}
    \end{small}
  \end{center}
  \vskip -0.1in
\end{table}

\begin{table}[t]
  \caption{Ablation of sub-block size on HunyuanVideo.}
  \label{tab:tile-ablation-hyvideo}
  \begin{center}
    \begin{small}
        \begin{tabular}{ccccc}
          \toprule
          Sub-block size    & PSNR$\uparrow$ & SSIM$\uparrow$ & LPIPS$\downarrow$ & Latency  \\
          \midrule
        16  & \textbf{29.378} & \textbf{0.901} & \textbf{0.087} & 830.4s\\
        32    & 29.151 & 0.898 & 0.089 & 828.9s \\
        64    & 28.894 & 0.892 & 0.092 & 827.8s\\
        w/o sub-block  & 28.565 & 0.887 & 0.098 &827.3s\\
          \bottomrule
        \end{tabular}
    \end{small}
  \end{center}
  \vskip -0.1in
\end{table}

\begin{figure}[ht]
  \begin{center}
    \centerline{\includegraphics[width=0.8\columnwidth]{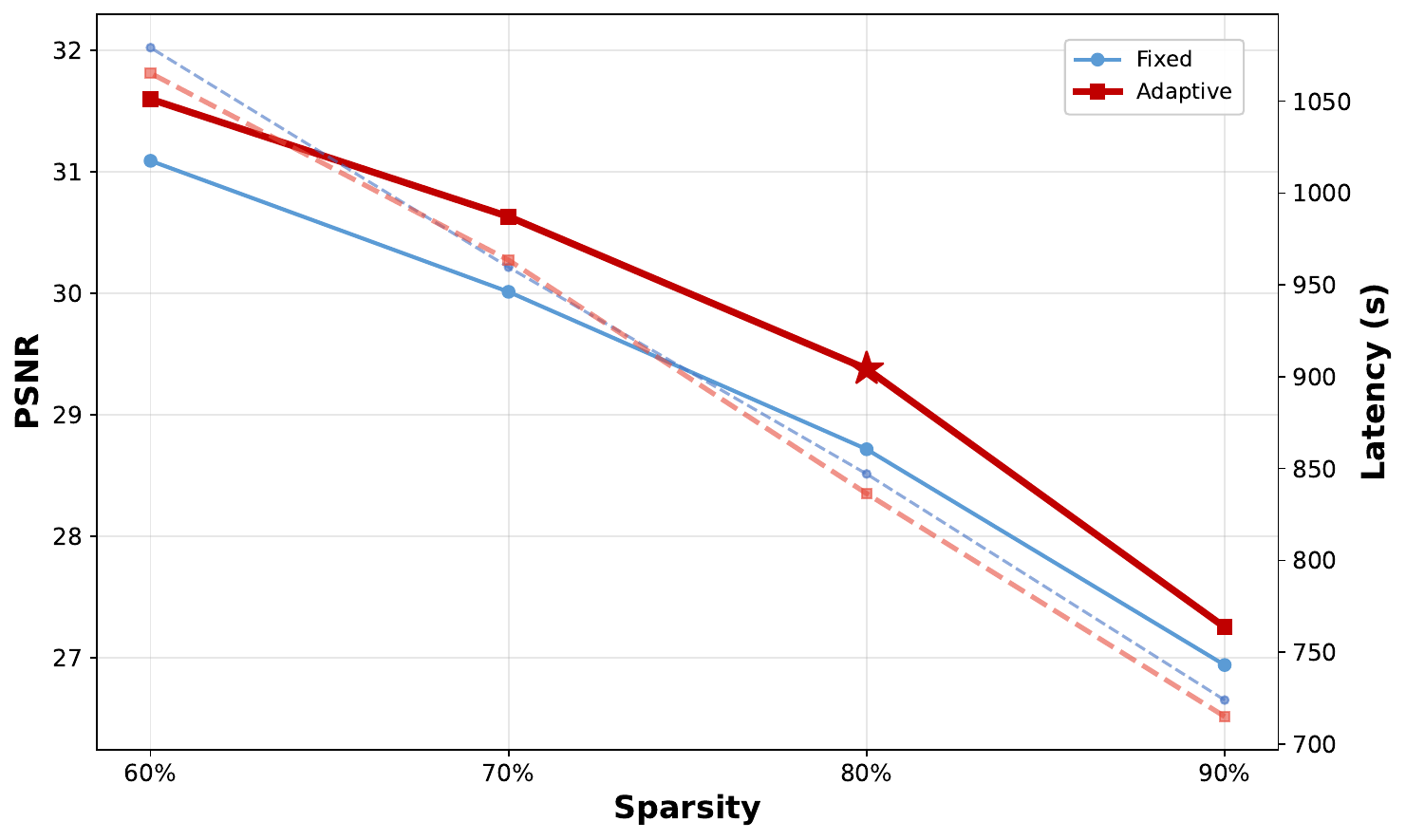}}
    \caption{PSNR (solid, left) and latency (dashed, right) vs. overall sparsity. Main results are evaluated under adaptive ~80\% sparsity.}
    \label{fig:budget}
  \end{center}
  \vskip -0.1in
\end{figure}

\subsection{Ablation study}

\paragraph{Token reordering.}
Beyond the default Raster scan, we evaluate two alternative strategies: \textit{Hilbert2D}, which applies Hilbert ordering independently within each frame, and \textit{Block3D}, which partitions tokens into $4\times4\times4$ cubes followed by local raster traversal. As shown in Table~\ref{tab:order-ablation}, locality-aware reordering consistently outperforms the standard Raster scan across all metrics. Among the evaluated strategies, Hilbert3D achieves the best performance by jointly preserving spatial and temporal locality throughout the sequence. In contrast, Hilbert2D ignores temporal continuity across frames, while Block3D introduces artificial partition boundaries that hinder global locality preservation. These results indicate that effective token traversal orders should maintain spatio-temporal locality simultaneously, which is precisely the design principle of Hilbert3D. The computational overhead of reordering is minimal in practice. The traversal order is computed once per inference and implemented as a static index mapping. For sequences containing approximately 120K tokens, the reordering overhead accounts for only 2\% of total runtime, which is negligible relative to the efficiency gains.

\paragraph{Sub-block size.}
Table~\ref{tab:tile-ablation-hyvideo} evaluates the effect of sub-block granularity of hierarchical block scoring. Smaller sub-blocks consistently improve reconstruction quality, as finer partitioning provides more accurate block-level representations for estimating attention importance. Notably, latency remains largely unchanged across all settings, suggesting that finer sub-block partitioning introduces negligible additional overhead. Additionally, we fix the block size to 128 throughout all experiments to align with GPU kernel design and maximize sparse attention execution efficiency.

\paragraph{Mask update interval.}
In the main experiments, we update the sparse attention mask every 25\% of the total denoising steps. This setting preserves comparable video quality while reducing inference latency compared with per-step mask recomputation. The update interval is not fixed, and can be adjusted according to the sampling schedule. For example, in reduced-step distilled models, the shorter sampling trajectory naturally limits the opportunities for mask reuse; nevertheless, DFSAttn still provides per-step acceleration through sparse attention computation. We provide additional results on a 4-step distilled model in Appendix~\ref{app:distill}.

\paragraph{Adaptive sparsity budget.}
Figure~\ref{fig:budget} presents the trade-off between video quality and inference latency under different overall sparsity budgets. We compare a fixed sparsity schedule, which applies a constant sparsity ratio throughout denoising, with the proposed adaptive schedule, where sparsity is dynamically adjusted across diffusion steps. As sparsity increases from 60\% to 90\%, both methods exhibit the expected degradation in PSNR together with reduced latency. However, the adaptive strategy consistently achieves superior reconstruction quality at comparable runtime, indicating that reallocating the sparsity budget across denoising steps substantially improves the effectiveness of block sparse attention. Based on this trade-off, we adopt the adaptive 80\% sparsity setting as the default configuration for the main experiments.

\

\section{Conclusion}
In this paper, we identify the fundamental challenges that limit the effectiveness of block sparse attention in high-sparsity regimes induced by the dynamic and fine-grained attention patterns of diffusion transformers. Guided by the derived theoretical lower bound of attention recall, we propose DFSAttn, a training-free sparse attention framework that integrates 3D Hilbert curve–based token reordering, hierarchical block scoring, and sparse mask caching with adaptive ratio, enabling fine-grained sparsification while preserving efficient GPU execution. Extensive experiments demonstrate that DFSAttn consistently outperforms existing sparse attention methods, achieving superior generation quality under a high sparsity level.

\section*{Acknowledgements}

This work is funded by the National Key Research and Development Program of China (No. 2024YFA1012902) and the National Natural Science Foundation of China (No. W2441021, 12288101). This research is also supported by the AI for Science Institute, Beijing, China and the National Engineering Laboratory for Big Data Analytics and Applications.

\section*{Impact Statement}

This paper presents work whose goal is to advance the field of Machine
Learning. There are many potential societal consequences of our work, none
which we feel must be specifically highlighted here.


\bibliography{ref}
\bibliographystyle{icml2026}

\newpage
\appendix
\onecolumn
\section{Proof of \cref{thm:topkprob} and Corollary \ref{crl:cmbrecall}}

Before proving  \cref{thm:topkprob}, we need a few lemmas.

\begin{lemma}[Pooled Centroid Distribution]\label{lem:centroidsts}
    Under Assumption \ref{ass:stsinblock}, The pooled block centroids satisfy
    \begin{equation}
        \hat q_u = \bar q_u + \xi_u^{(q)}+ \bar\varepsilon_u^{(q)},\quad
        \hat k_v = \bar k_v + \xi_v^{(k)}+\bar\varepsilon_v^{(k)}
    \end{equation}
    where $\bar\varepsilon_u^{(q)}\sim \mathcal N\!\left(0,\frac{\sigma^2}{B}I\right), \bar\varepsilon_v^{(k)}\sim \mathcal N\!\left(0,\frac{\sigma^2}{B}I\right).$
    \begin{proof}
    By definition,
    \begin{equation}
        \hat q_u
        = \frac{1}{B}\sum_{i\in\mathcal B_u} \big(\bar q_u + \xi_u^{(q)} + \varepsilon_i^{(q)}\big)
        = \bar q_u + \xi_u^{(q)} + \frac{1}{B}\sum_{i\in\mathcal B_u} \varepsilon_i^{(q)}.
    \end{equation}
    Defining $\bar\varepsilon_u^{(q)} := \frac{1}{B}\sum_{i\in\mathcal B_u} \varepsilon_i^{(q)}$, by independence and $\varepsilon_i^{(q)} \sim \mathcal N(0,\sigma^2 I)$, we have $\bar\varepsilon_u^{(q)} \sim \mathcal N(0,\frac{\sigma^2}{B}I)$. The derivation for $\hat k_v$ is analogous.
    \end{proof}

\end{lemma}

\begin{lemma}[Expectation of Approximate Block Score]\label{lem:shat}
    Under Assumptions \ref{ass:stsinblock}:
    \begin{equation}
    \begin{aligned}
        \mathbb E[\hat S_{uv}]&= \langle \bar q_u,\bar k_v\rangle\\
    \end{aligned}
    \end{equation}
    \begin{proof}
    Let
    \begin{equation}
    \zeta_u^{(q)}:=\xi_u^{(q)}+ \bar\varepsilon_u^{(q)}, \quad
    \zeta_v^{(k)}:=\xi_v^{(k)}+ \bar\varepsilon_v^{(k)}
    \end{equation}
    According to Assumption \ref{ass:stsinblock}, $\zeta_u^{(q)}$ and $\zeta_v^{(k)}$ are independent and they satisfy:
    \begin{equation}\label{eq:zeta}
    \zeta_u^{(q)}\sim \mathcal N\!\left(0,\delta^2I\right), \quad
    \zeta_v^{(k)}\sim \mathcal N\!\left(0,\delta^2I\right).
    \end{equation}
    where $\delta^2=\tau^2+\frac{\sigma^2}{B}$.
    From Lemma \ref{lem:centroidsts}, we can expand $\hat S_{uv}$ as
    \begin{equation}\label{eq:shat_uv}
        \hat S_{uv} 
        = \langle \bar q_u,\bar k_v\rangle
        + \langle \bar q_u,\zeta_v^{(k)}\rangle 
        + \langle \bar k_v,\zeta_u^{(q)}\rangle
        + \langle \zeta_u^{(q)},\zeta_v^{(k)}\rangle
    \end{equation}
    Since all random terms are zero-mean and independent, the last three terms vanish in expectation. Therefore,
    \begin{equation}
        \mathbb E[\hat S_{uv}]=\langle \bar q_u,\bar k_v\rangle.
    \end{equation}
    \end{proof}
\end{lemma}

\begin{lemma}[Pairwise score ordering]\label{lem:prob_pairscore}
Define $D_{vv'}:=\hat S_{uv}-\hat S_{uv'}$ and
$\Delta\mu_{vv'}:=\langle \bar q_u,\bar k_v-\bar k_{v'}\rangle$.
There exists a universal constant $c>0$ such that
\begin{equation}
\begin{aligned}
    \mathbb P(D_{vv'}<0)
    \le
    \exp\!\left(
    -\frac{\Delta\mu_{vv'}^2}{
    8\bigl(\|\bar k_v-\bar k_{v'}\|^2+2\|\bar q_u\|^2\bigr)\delta^2}
    \right) 
    +
    \exp\!\left(
    - c \min\!\left(
    \frac{\Delta\mu_{vv'}^2}{\delta^4 d},
    \frac{\Delta\mu_{vv'}}{\delta^2}
    \right)
    \right).
\end{aligned}
\end{equation}
\end{lemma}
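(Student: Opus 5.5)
Using the expansion \eqref{eq:shat_uv} from Lemma~\ref{lem:shat}, I will write the score difference as a mean plus a linear and a bilinear fluctuation:
\begin{equation*}
D_{vv'} = \Delta\mu_{vv'} + L + Q,
\end{equation*}
where
\begin{equation*}
L := \langle \bar q_u,\zeta_v^{(k)}-\zeta_{v'}^{(k)}\rangle + \langle \bar k_v-\bar k_{v'},\zeta_u^{(q)}\rangle,
\qquad
Q := \langle \zeta_u^{(q)},\zeta_v^{(k)}-\zeta_{v'}^{(k)}\rangle .
\end{equation*}
The event $D_{vv'}<0$ forces $L+Q<-\Delta\mu_{vv'}$. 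Assuming $\Delta\mu_{vv'}>0$ (otherwise the bound is trivial or vacuous), this implies $L<-\Delta\mu_{vv'}/2$ or $Q<-\Delta\mu_{vv'}/2$. A union bound then reduces the claim to bounding these two tails separately, which produces the two exponential terms.

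\textbf{Linear term.} The vectors $\zeta_u^{(q)},\zeta_v^{(k)},\zeta_{v'}^{(k)}$ are independent $\mathcal N(0,\delta^2 I)$ by \eqref{eq:zeta}, so $L$ is a centered Gaussian. Its variance is
\begin{equation*}
\delta^2\bigl(2\|\bar q_u\|^2+\|\bar k_v-\bar k_{v'}\|^2\bigr),
\end{equation*}
where the factor $2$ comes from $\zeta_v^{(k)}-\zeta_{v'}^{(k)}\sim\mathcal N(0,2\delta^2 I)$. The Gaussian tail $\mathbb P(L<-t)\le e^{-t^2/(2\mathrm{Var} L)}$ at $t=\Delta\mu_{vv'}/2$ gives exactly the first term with the constant $8$. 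The key check is that $L$ is a single Gaussian rather than two separately bounded pieces, which is what yields the constant $8$ instead of something looser.

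\textbf{Bilinear term.} Write $\eta:=(\zeta_v^{(k)}-\zeta_{v'}^{(k)})/\sqrt2\sim\mathcal N(0,\delta^2 I)$, which is independent of $\zeta_u^{(q)}$. Then
\begin{equation*}
Q=\sqrt2\sum_{\ell=1}^d \zeta_{u,\ell}^{(q)}\eta_\ell ,
\end{equation*}
a sum of $d$ independent centered products of independent Gaussians. Each summand is sub-exponential with $\psi_1$-norm $O(\delta^2)$, since the product of two sub-Gaussians with $\psi_2$-norm $O(\delta)$ has $\psi_1$-norm $O(\delta^2)$. Bernstein's inequality for sums of sub-exponential variables, applied at $t=\Delta\mu_{vv'}/2$, gives
\begin{equation*}
\mathbb P(Q<-t)\le \exp\!\left(-c\min\!\left(\frac{t^2}{\delta^4 d},\frac{t}{\delta^2}\right)\right),
\end{equation*}
with the constants $\sqrt2$ and $1/2$ absorbed into $c$. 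This yields the second term. Alternatively, one could view $Q$ as a Gaussian chaos and apply Hanson--Wright to the block matrix coupling $\zeta_u^{(q)}$ and $\eta$; it has Frobenius norm $\sim\sqrt d$ and operator norm $O(1)$, giving the same form.

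\textbf{Main obstacle.} The delicate step is the bilinear term. The constants must be tracked so that $c$ is universal, and the minimum in Bernstein's inequality must come out as stated. Everything else is direct Gaussian computation.
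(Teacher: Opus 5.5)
Your proof is correct and follows the paper's argument. You use the same split of $D_{vv'}$ into $\Delta\mu_{vv'}$ plus a linear Gaussian term and a bilinear term, the same union bound at level $\Delta\mu_{vv'}/2$, and the same single-Gaussian tail that produces the constant $8$; you are also slightly more careful than the paper in noting that for $\Delta\mu_{vv'}\le 0$ the right-hand side is already at least $1$.

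The only difference is in the bilinear term. You rewrite $\zeta_v^{(k)}-\zeta_{v'}^{(k)}=\sqrt2\,\eta$ and apply Bernstein's inequality to the $d$ independent sub-exponential products $\zeta_{u,\ell}^{(q)}\eta_\ell$. The paper instead applies Hanson--Wright to the stacked $3d$-dimensional Gaussian vector, using $\|A\|_F^2=O(d)$ and $\|A\|=O(1)$. Since the form is already decoupled, your route is the more elementary specialization and gives the same bound.
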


\begin{proof}
From \cref{eq:shat_uv}, we can write
\begin{equation}\label{eq:D_vv'_expanded}
\begin{aligned}
D_{vv'}
=
\langle \bar q_u,\bar k_v-\bar k_{v'}\rangle
+
\langle \bar q_u,\zeta_v^{(k)}-\zeta_{v'}^{(k)}\rangle
+
\langle \bar k_v-\bar k_{v'},\zeta_u^{(q)}\rangle
+
\langle \zeta_u^{(q)},\zeta_v^{(k)}-\zeta_{v'}^{(k)}\rangle .
\end{aligned}
\end{equation}
By Lemma~\ref{lem:shat}, $\mathbb E[D_{vv'}]=\Delta\mu_{vv'}$.
Let $\mu_D:=\mathbb E[D_{vv'}]$ and write
\begin{equation}
    D_{vv'}-\mu_D=
\underbrace{
\langle \bar q_u,\zeta_v^{(k)}-\zeta_{v'}^{(k)}\rangle
+
\langle \bar k_v-\bar k_{v'},\zeta_u^{(q)}\rangle}_{\text{linear term}}
+
\underbrace{
\langle \zeta_u^{(q)},\zeta_v^{(k)}-\zeta_{v'}^{(k)}\rangle}_{\text{quadratic term}} .
\end{equation}

We collect the Gaussian variables into
\begin{equation}
    z=
\begin{pmatrix}
\zeta_u^{(q)}\\
\zeta_v^{(k)}\\
\zeta_{v'}^{(k)}
\end{pmatrix}
\in\mathbb R^{3d},
\qquad
z\sim\mathcal N(0,\Sigma),
\quad
\Sigma=\mathrm{diag}(\delta^2 I_d,\delta^2 I_d,\delta^2 I_d).
\end{equation}
Then the linear term can be written as $\ell^\top z$ for a deterministic vector $\ell$,
and the quadratic term as $z^\top A z$, where
\[
A=\frac12
\begin{pmatrix}
0 & I & -I\\
I & 0 & 0\\
- I & 0 & 0
\end{pmatrix}.
\]
Hence,
\begin{equation}
    D_{vv'}-\mu_D
=
\ell^\top z + \bigl(z^\top A z-\mathbb E[z^\top A z]\bigr).
\end{equation}

Applying a union bound,
\begin{equation}\label{eq:union_bound}
\begin{aligned}
\mathbb P(D_{vv'}<0)
\le\;&
\mathbb P\!\left(\ell^\top z\le -\frac{\mu_D}{2}\right)
+
\mathbb P\!\left(
z^\top A z-\mathbb E[z^\top A z]\le -\frac{\mu_D}{2}
\right).
\end{aligned}
\end{equation}

The first term is Gaussian with variance $\|\Sigma^{1/2}\ell\|_2^2$, yielding
\begin{equation}\label{eq:linear_bound}
\begin{aligned}
        \mathbb P\!\left(\ell^\top z\le -\frac{\mu_D}{2}\right)
    &=\Phi\left(-\frac{\mu_D}{2\|\Sigma^{1/2}\ell\|_2}\right) \\
&\le
\exp\!\left(
-\frac{\mu_D^2}{
8\bigl(\|\bar k_v-\bar k_{v'}\|^2+2\|\bar q_u\|^2\bigr)\delta^2}
\right).
\end{aligned}
\end{equation}

For the quadratic term, the Hanson--Wright inequality \cite{rudelson2013hanson} implies that there exists
a constant $c'$ such that
\begin{equation}
\begin{aligned}
    \mathbb P\!\left(
    \left(z^\top A z-\mathbb E[z^\top A z]\right)\le -\frac{\mu_D}{2}
    \right) 
    \le
    \exp\!\left(
    - c' \min\!\left(
    \frac{\mu_D^2}{4K^4\|A\|_F^2},
    \frac{\mu_D}{2K^2\|A\|}
    \right)
    \right).
\end{aligned}
\end{equation}
where $K=\max_{i}\|z_i\|_{\psi_2}$ is the subgaussian norm of $z$, $\|A\|_F^2$ is the Frobenius norm of the matrix, and $\|A\|$ is the operator norm of the matrix. Since $z$ is Gaussian, the subgaussian norm satisfies
\[
\|z_i\|_{\psi_2} \le C_{\psi_2} \delta
\]
for all coordinates $z_i$, where $C_{\psi_2}>0$ is a universal constant. Combing $\|A\|_F^2=\mathcal O(d)$ and $\|A\|=\mathcal O(1)$,  By absorbing all constants into $c$, we have
\begin{equation}\label{eq:quad_bound}
\mathbb P\!\left(
z^\top A z-\mathbb E[z^\top A z]\le -\frac{\mu_D}{2}
\right)
\le
\exp\!\left(
- c \min\!\left(
\frac{\mu_D^2}{\delta^4 d},
\frac{\mu_D}{\delta^2}
\right)
\right),
\end{equation}

Combining \cref{eq:union_bound}, (\ref{eq:linear_bound}) and (\ref{eq:quad_bound}) completes the proof.
\end{proof}

Now we are ready to prove \cref{thm:topkprob}. We first restate the theorem.

\begin{theorem}[\sc Probability of correct block selection]
    Let $$\Delta \mu_{\min} := \min_{v\in \mathcal{T}_K,v'\notin \mathcal{T}_K} \langle \bar q_u,\bar k_v-\bar k_{v'}\rangle$$ denote the minimum similarity gap between relevant and irrelevant block centroids, and  assume $\|\bar q_u\|^2,\|\bar k_v\|^2\leq C$. Then, there exists a constant $c$ such that
    \begin{equation}
    \mathbb P(\mathcal{T}_K = \hat{\mathcal{T}}_K) 
    \ge 1 - \sum_{v \in \mathcal{T}_K} \sum_{v' \notin \mathcal{T}_K}\left[e^{-\phi_1}+e^{-\phi_2}\right]
    \end{equation}
    where
    $$
        \phi_1= \frac{\Delta \mu_{\min}^2}{48C\delta^2}, \quad \phi_2= c\min\left(\frac{\Delta \mu_{\min}^2}{\delta^4 d}, \frac{\Delta \mu_{\min}}{\delta^2}\right)
    $$
    and $\delta^2=\tau^2+\sigma^2/B$.
\end{theorem}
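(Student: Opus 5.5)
The plan is to reduce the event $\{\mathcal T_K \neq \hat{\mathcal T}_K\}$ to a union of pairwise misorderings, and then apply Lemma~\ref{lem:prob_pairscore} to each pair.

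\textbf{Step 1 (reduction to pairs).} If every $v\in\mathcal T_K$ and every $v'\notin\mathcal T_K$ satisfy $\hat S_{uv} > \hat S_{uv'}$, then the $K$ largest approximate scores are exactly those indexed by $\mathcal T_K$, so $\hat{\mathcal T}_K=\mathcal T_K$. Ties occur with probability zero, since the scores have continuous Gaussian-driven laws. Hence
\begin{equation*}
\{\hat{\mathcal T}_K\neq\mathcal T_K\}\subseteq\bigcup_{v\in\mathcal T_K}\bigcup_{v'\notin\mathcal T_K}\{D_{vv'}<0\}.
\end{equation*}
A union bound then gives $\mathbb P(\hat{\mathcal T}_K=\mathcal T_K)\ge 1-\sum_{v,v'}\mathbb P(D_{vv'}<0)$.

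\textbf{Step 2 (uniformize the pairwise bound).} For each pair I will apply Lemma~\ref{lem:prob_pairscore} and replace pair-dependent quantities by worst-case ones. First, $\Delta\mu_{vv'}\ge\Delta\mu_{\min}$. Both exponents in the lemma are increasing in $\Delta\mu_{vv'}$ for $\Delta\mu_{vv'}>0$: the first is quadratic over a positive quantity, and the second is a minimum of increasing functions. So substituting $\Delta\mu_{\min}$ can only enlarge the bound. Second, by the norm assumption,
\begin{equation*}
\|\bar k_v-\bar k_{v'}\|^2\le 2\|\bar k_v\|^2+2\|\bar k_{v'}\|^2\le 4C, \qquad 2\|\bar q_u\|^2\le 2C.
\end{equation*}
The denominator of the first exponent is therefore at most $8\cdot 6C\,\delta^2=48C\delta^2$, which produces $\phi_1$. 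The second exponent already has the form $\phi_2$ with the universal constant $c$ from the lemma. Summing over pairs completes the proof.

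\textbf{Main obstacle.} Monotonicity needs $\Delta\mu_{\min}>0$. Since $\mathcal T_K$ is defined through the true mass $\alpha_{uv}$ and not through $\langle\bar q_u,\bar k_v\rangle$, this is not automatic. I would state it as an implicit hypothesis: the oracle blocks have strictly larger centroid similarity. If $\Delta\mu_{\min}\le 0$, the bound is simply read as vacuous, and the proof must say explicitly that the inequality is only meaningful in that regime. A second, minor point concerns the constant: the Hanson--Wright step in Lemma~\ref{lem:prob_pairscore} absorbs $\|A\|_F^2=\tfrac{d}{2}\cdot 2$ and $\|A\|=1/\sqrt2$ into $c$. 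I would check that this constant is independent of the pair $(v,v')$, which holds because $A$ does not depend on the centroids.
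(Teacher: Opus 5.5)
Your proposal is correct and follows the paper's proof essentially step for step. Both arguments reduce $\{\hat{\mathcal T}_K=\mathcal T_K\}$ to the pairwise events $\{D_{vv'}>0\}$, apply a union bound, and uniformize Lemma~\ref{lem:prob_pairscore} via $\Delta\mu_{vv'}\ge\Delta\mu_{\min}$ and $\|\bar k_v-\bar k_{v'}\|^2+2\|\bar q_u\|^2\le 6C$, which is exactly the source of the paper's $48C\delta^2$. Your positivity caveat need not be an extra hypothesis: if $\Delta\mu_{\min}\le 0$ then $\phi_2\le 0$, so each summand exceeds $1$, the right-hand side is negative, and the stated inequality holds trivially.
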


\begin{proof}
    By the definition in \cref{lem:prob_pairscore}, $\mathcal{T}_K=\hat{\mathcal{T}}_K$ if and only if:
    \begin{align*}
        D_{vv'}>0 \quad \forall v\in \mathcal{T}_K,v'\notin \mathcal{T}_K.
    \end{align*}
    Therefore, the probability that selected blocks align with the true top blocks is
    \begin{equation}\label{eq:prob_align}
        \begin{aligned}
            \mathbb P(\mathcal{T}_K=\hat{\mathcal{T}}_K)&=
        \mathbb P\!\left(
        \bigcap_{v\in \mathcal{T}_K,v'\notin \mathcal{T}_K}
        \{D_{vv'} > 0\}
        \right) \\
        &\ge
        1 -
        \sum_{v\in \mathcal{T}_K}\sum_{v'\notin \mathcal{T}_K}
        \mathbb P(D_{vv'} \le 0).
        \end{aligned}
    \end{equation}
    From Lemma \ref{lem:prob_pairscore},
    \begin{equation}
    \begin{aligned}
        \mathbb P(D_{vv'}< 0)\le \exp\!\left(-\frac{\Delta\mu_{vv'}^2}{8\left(\|\bar k_v-\bar k_{v'}\|^2+2\|\bar q_u\|^2\right)\delta^2}\right)
        +\exp\!\left(
        - c \min\!\left(
        \frac{\Delta\mu_{vv'}^2}{\delta^4d},
        \frac{\Delta\mu_{vv'}}{\delta^2}
        \right)
        \right). 
    \end{aligned}
    \end{equation}
    Since $\|\bar q_u\|^2,\|\bar k_v\|^2\leq C$, based on the definition of $\Delta \mu_{\min}$:
    \begin{equation}\label{eq:Dvv_finalbd}
    \begin{aligned}
        \mathbb P(D_{vv'}< 0)\le\exp\!\left(-\frac{\Delta \mu_{\min}^2}{48C\delta^2}\right) 
        +\exp\!\left(
        - c \min\!\left(
        \frac{\Delta \mu_{\min}^2}{\delta^4d},
        \frac{\Delta \mu_{\min}}{\delta^2}
        \right)
        \right). 
    \end{aligned}
    \end{equation}
    Combining \cref{eq:prob_align} and (\ref{eq:Dvv_finalbd}) completes the proof.
    \end{proof}

\begin{corollary}[\sc Expected attention recall]
Let $\gamma = K/M$ denote the sparsity budget. The expected attention recall for query block $\mathcal B_u$ satisfies
\begin{equation}
    \mathbb E[\mathcal R_u]
    \ge
    \gamma \cdot \mathbb P(\hat{\mathcal T}_K = \mathcal T_K).
\end{equation}
\end{corollary}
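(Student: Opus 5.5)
We condition on the event $E:=\{\hat{\mathcal T}_K=\mathcal T_K\}$ and use that the recall is nonnegative. Concretely, we read the recall for query block $\mathcal B_u$ as
\[
\mathcal R_u=\frac{\sum_{v\in\hat{\mathcal T}_K}\alpha_{uv}}{\sum_{v=1}^M\alpha_{uv}},
\]
which is the fraction of the row-block attention mass kept by the mask. The bound follows from two ingredients: a deterministic lower bound $\mathcal R_u\ge\gamma$ on $E$, and the fact that $\mathcal R_u\ge 0$ always.

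\textbf{Step 1: the deterministic bound.} On $E$ the retained blocks are exactly the oracle blocks, so
\[
\mathcal R_u=\frac{\sum_{v\in\mathcal T_K}\alpha_{uv}}{\sum_{v}\alpha_{uv}}.
\]
By \cref{eq:def_truetopk}, $\mathcal T_K$ consists of the $K$ indices with the largest $\alpha_{uv}$. Sort the values as $\alpha_{u(1)}\ge\dots\ge\alpha_{u(M)}\ge 0$; they are nonnegative because they are sums of softmax entries. The average of the top $K$ values is at least the average of all $M$ values, so
\[
\frac1K\sum_{v\in\mathcal T_K}\alpha_{uv}\ge\frac1M\sum_{v}\alpha_{uv}.
\]
To prove this cleanly, write $s_K=\sum_{i\le K}\alpha_{u(i)}$. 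Then $s_M-s_K\le (M-K)\,\alpha_{u(K)}\le (M-K)\,s_K/K$, which rearranges to $s_K/s_M\ge K/M=\gamma$. Hence $\mathcal R_u\ge\gamma$ on $E$. The denominator is strictly positive, since each row of $A$ sums to one, so $\sum_v\alpha_{uv}=B>0$.

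\textbf{Step 2: take expectations.} Since $\mathcal R_u\ge0$ always,
\[
\mathbb E[\mathcal R_u]\ge\mathbb E[\mathcal R_u\mathbf 1_E]\ge\gamma\,\mathbb P(E).
\]
This is the claim. The probability $\mathbb P(E)$ can then be bounded further using \cref{thm:topkprob}.

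\textbf{Main obstacle.} The only delicate point is modeling. The $\alpha_{uv}$ are themselves random under Assumption~\ref{ass:stsinblock}, so $\mathcal T_K$ is a random set. The averaging argument in Step 1 holds pointwise for every realization, so this randomness causes no difficulty. I will state explicitly that the recall is the ratio of retained to total attention mass in block row $u$.
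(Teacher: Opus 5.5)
Your proof is correct and follows the same route as the paper: use nonnegativity of $\mathcal R_u$ to keep only the event $\{\hat{\mathcal T}_K=\mathcal T_K\}$, then on that event bound the retained fraction of block-row mass below by $K/M$, because the top-$K$ values of $\alpha_{uv}$ have average at least the overall average. Your version is slightly more careful than the paper's in two places. Writing $\mathbb E[\mathcal R_u\mathbf 1_E]$ and arguing pointwise avoids treating the conditional expectation as a deterministic ratio. You also prove the top-$K$ averaging inequality explicitly, which the paper only asserts.
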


    \begin{proof}
    Since the attention weights is non-negative, we have
    \begin{equation}
        \mathbb E\left[\mathcal{R}_u\right]\geq
        \mathbb E\left[\mathcal{R}_u\vert \mathcal{T}_K=\hat{\mathcal{T}}_K\right] \mathbb P(\mathcal{T}_K=\hat{\mathcal{T}}_K)
    \end{equation}
    According to the definition of true Top-K blocks in \cref{eq:def_truetopk}:
    \begin{equation}
        \mathbb E\left[\mathcal{R}_u\vert \mathcal{T}_K=\hat{\mathcal{T}}_K\right] =\frac{\sum_{v\in\mathcal{T}_K}\alpha_{uv}}{\sum_{v=1}^M\alpha_{uv}}\geq \frac{K}{M}
    \end{equation}
    Combine with \cref{thm:topkprob} yields the conclusion.
    \end{proof}

\section{Implementation Details}\label{app:imp_dtl}
All baseline methods are evaluated using their official codebases and recommended settings. For RadialAttention \cite{li2026radial}, the decay factor is set to be $0.95$ for HunyuanVideo and $0.2$ for Wan2.1. For SVG \cite{xi2025sparse}, the sparsity parameter for constructing the attention mask is set to 0.3 on Wan2.1 and 0.25 on HunyuanVideo. For SVG2, we adopt the recommended clustering parameters $C_q=100, C_k=500$, with a top-$p$ sparsity parameter of 0.9 to control token selection. The reported sparsity levels of baselines are provided by their official codebases. 

As for DFSAttn, in practice, we set the sparse mask update interval to be 12 (approximately 25\% of the 50 diffusion steps in the main experiments), and skip the sparsification of the first transformer block on Wan2.1. For end-to-end speedup comparison, we integrates the fast QK-Norm and RoPE CUDA kernels from Sparse-VideoGen.\footnote{\url{https://github.com/svg-project/Sparse-VideoGen}}

\section{Additional Results}

\subsection{Scalability}
\label{app:scalability}
We evaluate DFSAttn under varying resolutions and frames, as shown in Table~\ref{tab:scalability}. The results demonstrate that DFSAttn maintains stable generation quality across all settings, while achieving an increasing speedup at larger scales. This trend is expected, as the computational overhead of attention scales quadratically with sequence length, making sparsification become more pronounced for higher resolutions and longer videos. These results demonstrate that DFSAttn scales effectively without compromising visual quality, highlighting its suitability for high-resolution and long-video generation. Notably, the experiments here are conducted on an NVIDIA A100 GPU, which delivers smaller overall speedups compared to the H100 GPU in the main experiments.

\begin{table}[t]
  \caption{Performance of DFSAttn across resolutions and frames on HunyuanVideo.}
  \label{tab:scalability}
  \begin{center}
        \begin{tabular}{lcccc}
          \toprule
          height $\times$ width $\times$ frame     & PSNR $\uparrow$ & SSIM $\uparrow$ & LPIPS $\downarrow$ & Speedup $\uparrow$ \\
          \midrule
          544 $\times$ 960 $\times$ 129	& 28.3176	 & 0.9059 &	0.0784	&1.61$\times$ \\
          720 $\times$ 1280 $\times$ 65	& 29.7371	&0.9174	&0.0786	&1.57$\times$ \\
          720 $\times$ 1280 $\times$ 129 & 28.6809	&0.8952	&0.0804	&1.81$\times$ \\
          720 $\times$ 1280 $\times$ 193 &	29.6944 &	0.8906 &	0.0862 &1.95$\times$ \\
          \bottomrule
        \end{tabular}
  \end{center}
\end{table}

\subsection{Results on Distilled Models}
\label{app:distill}

Many production-oriented video diffusion models employ step distillation or consistency distillation to reduce the number of denoising steps at inference time. This setting differs from standard multi-step sampling, where DFSAttn can further benefit from reusing sparse attention masks across nearby denoising steps. In reduced-step models, the shortened sampling trajectory naturally decreases the opportunities for mask reuse. We therefore evaluate whether DFSAttn remains effective when applied to a distilled model where acceleration mainly comes from per-step sparse attention computation.

Specifically, we evaluate DFSAttn on the 4-step distilled Wan2.1 model released by LightX2V.\footnote{\url{https://huggingface.co/lightx2v/Wan2.1-Distill-Models}} We use the same DFSAttn hyperparameters as in the main experiments, while disabling mask reuse in this distilled setting. Sparse attention is bypassed at the first denoising step, and the sparse mask is recomputed every three subsequent steps using the adaptive sparsity budget.

Table~\ref{tab:distilled_wan} reports the results. DFSAttn achieves a $1.19\times$ speedup over full attention on the 4-step distilled model, while maintaining comparable generation quality across all evaluated metrics. The changes in AQ, BC, IQ, and SC are minor, and MS remains on par with full attention. These results show that DFSAttn is compatible with distilled video diffusion models. Although reduced-step sampling limits the potential benefit of mask reuse, the per-step acceleration from sparse attention remains effective and provides complementary inference speedup without retraining or modifying the distilled checkpoint.

\begin{table}[t]
  \caption{Performance of DFSAttn on the Wan2.1 4-step distilled model. DFSAttn uses the same hyperparameter configuration as in the main experiments, with mask reuse disabled in this distilled setting.}
  \label{tab:distilled_wan}
  \begin{center}
    \begin{tabular}{lcccccc}
      \toprule
      \textbf{Method} & AQ $\uparrow$ & BC $\uparrow$ & IQ $\uparrow$ & MS $\uparrow$ & SC $\uparrow$ & Speedup $\uparrow$ \\
      \midrule
      Full    & 0.3750 & 0.9228 & 0.3618 & 0.9804 & 0.8540 & 1.00$\times$ \\
      DFSAttn & 0.3742 & 0.9187 & 0.3580 & 0.9810 & 0.8523 & 1.19$\times$ \\
      \bottomrule
    \end{tabular}
  \end{center}
\end{table}

\section{Examples of Generated Videos}\label{app:video_example}

We provide several examples of videos generated by DFSAttn and full attention on HunyuanVideo and Wan2.1 in Figure~\ref{fig:videos_by_hunyuan} and Figure~\ref{fig:videos_by_wan} respectively. The visualization further demonstrate that DFSAttn generates videos highly consistent with full attention, preserving fine-grained details
and temporal consistency.

\begin{figure}[t]
    \centering
    \begin{subfigure}{\linewidth}
        \includegraphics[width=0.9\linewidth]{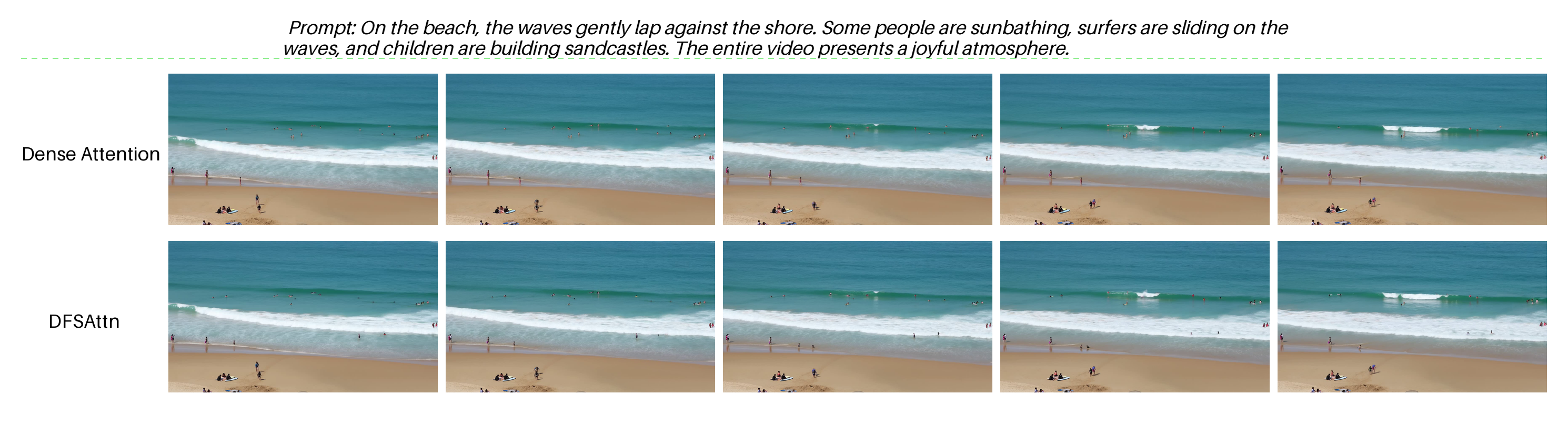}
    \end{subfigure}\\
    \begin{subfigure}{\linewidth}
        \includegraphics[width=0.9\linewidth]{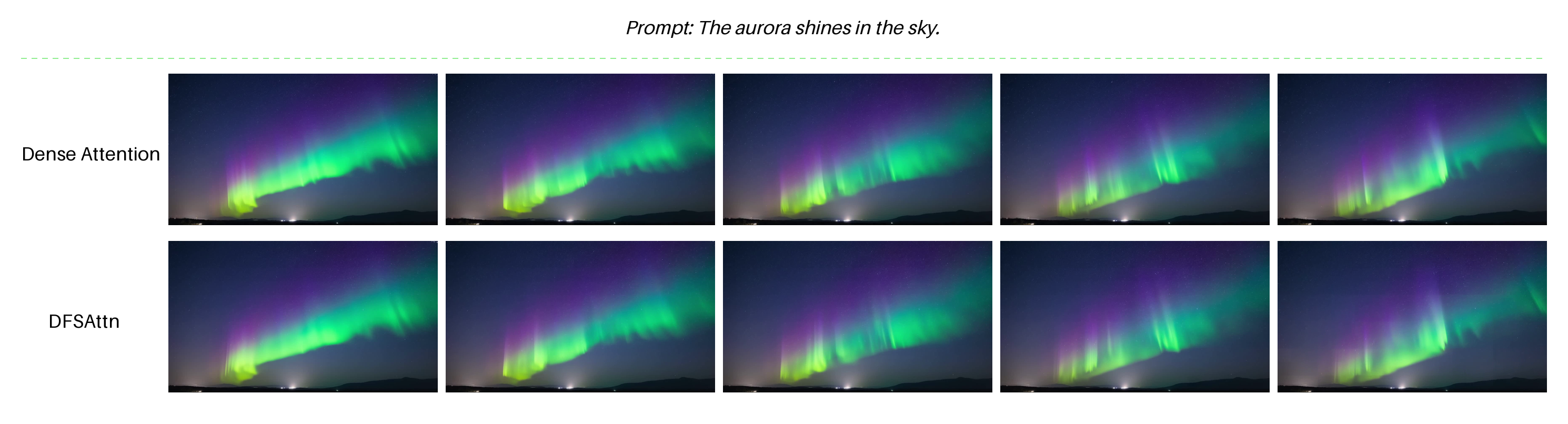}
    \end{subfigure}\\
    \begin{subfigure}{\linewidth}
        \includegraphics[width=0.9\linewidth]{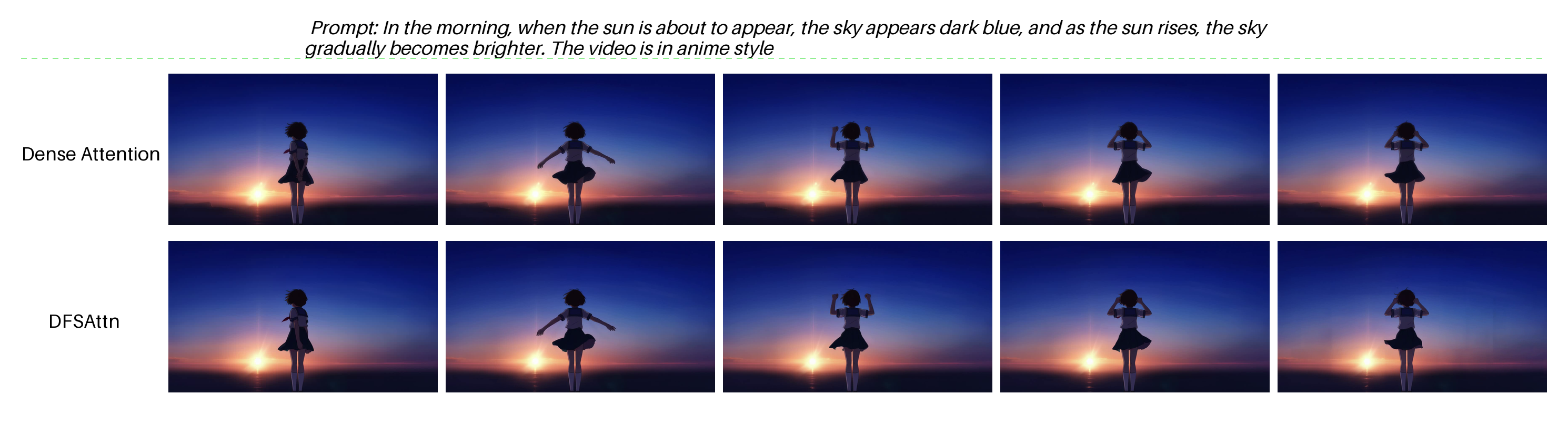}
    \end{subfigure}\\
    \begin{subfigure}{\linewidth}
        \includegraphics[width=0.9\linewidth]{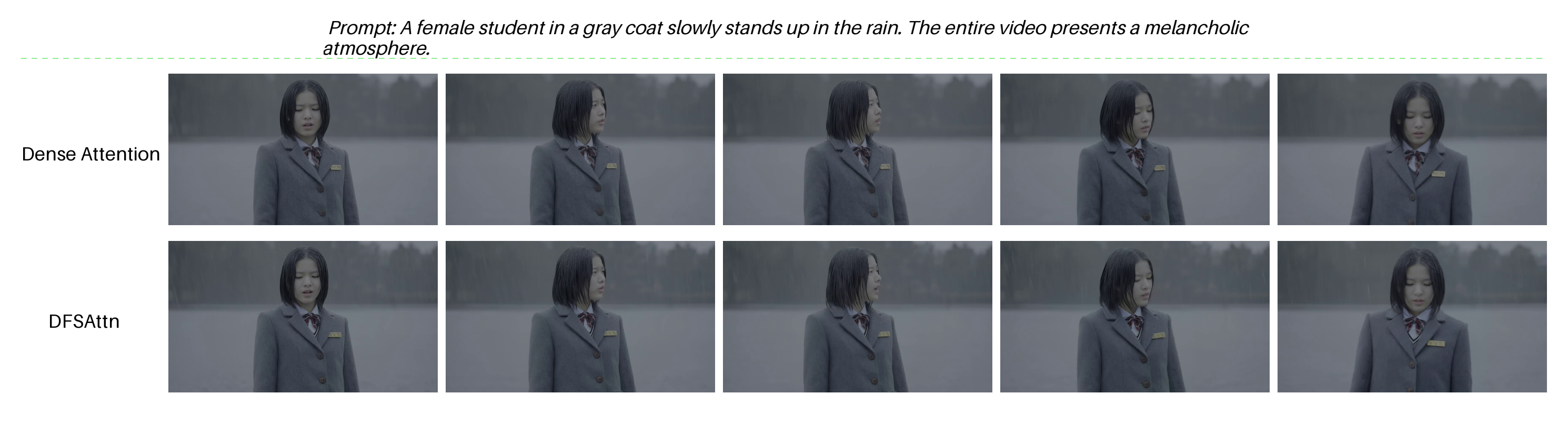}
    \end{subfigure}\\
    \begin{subfigure}{\linewidth}
        \includegraphics[width=0.9\linewidth]{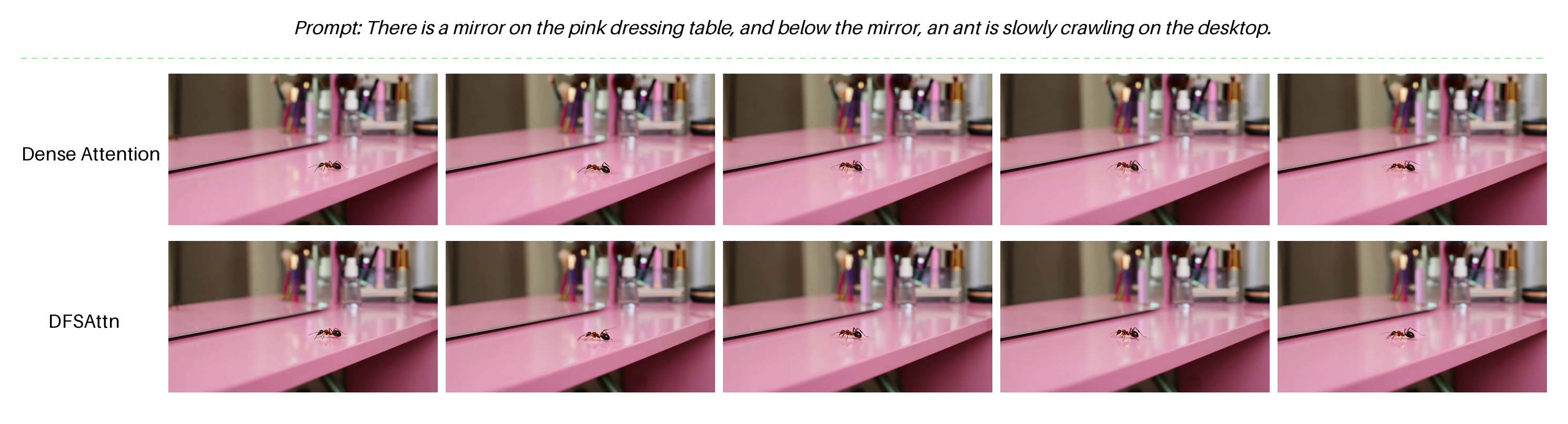}
    \end{subfigure}\\
    \caption{Video generation examples from dense attention and DFSAttn on HunyuanVideo.}
    \label{fig:videos_by_hunyuan}
\end{figure}

\begin{figure}[t]
    \centering
    \begin{subfigure}{\linewidth}
        \includegraphics[width=0.9\linewidth]{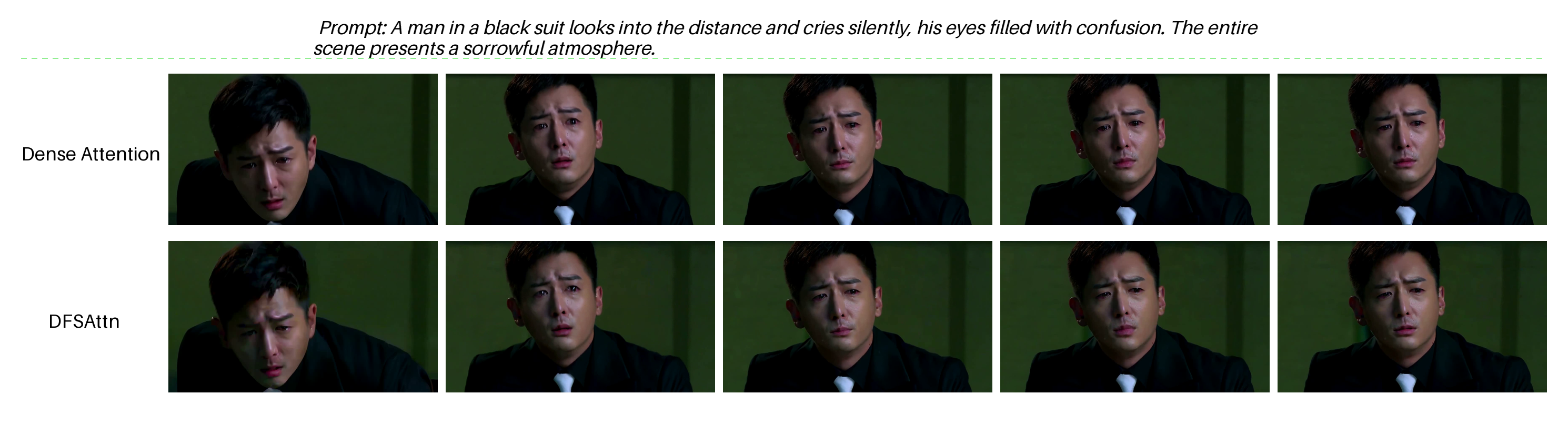}
    \end{subfigure}\\
    \begin{subfigure}{\linewidth}
        \includegraphics[width=0.9\linewidth]{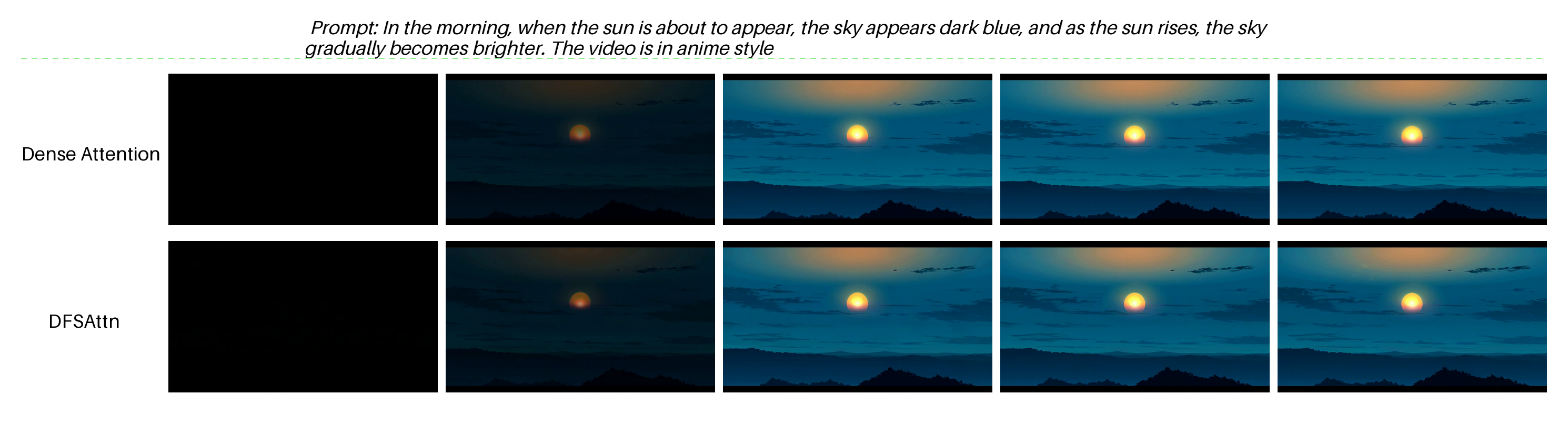}
    \end{subfigure}\\
    \begin{subfigure}{\linewidth}
        \includegraphics[width=0.9\linewidth]{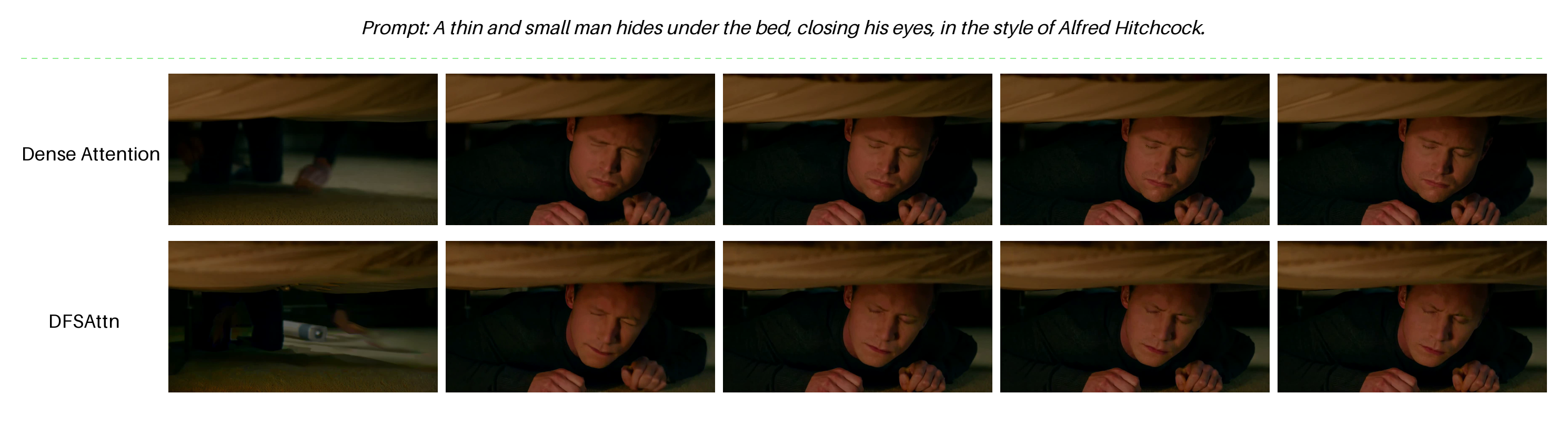}
    \end{subfigure}\\
    \begin{subfigure}{\linewidth}
        \includegraphics[width=0.9\linewidth]{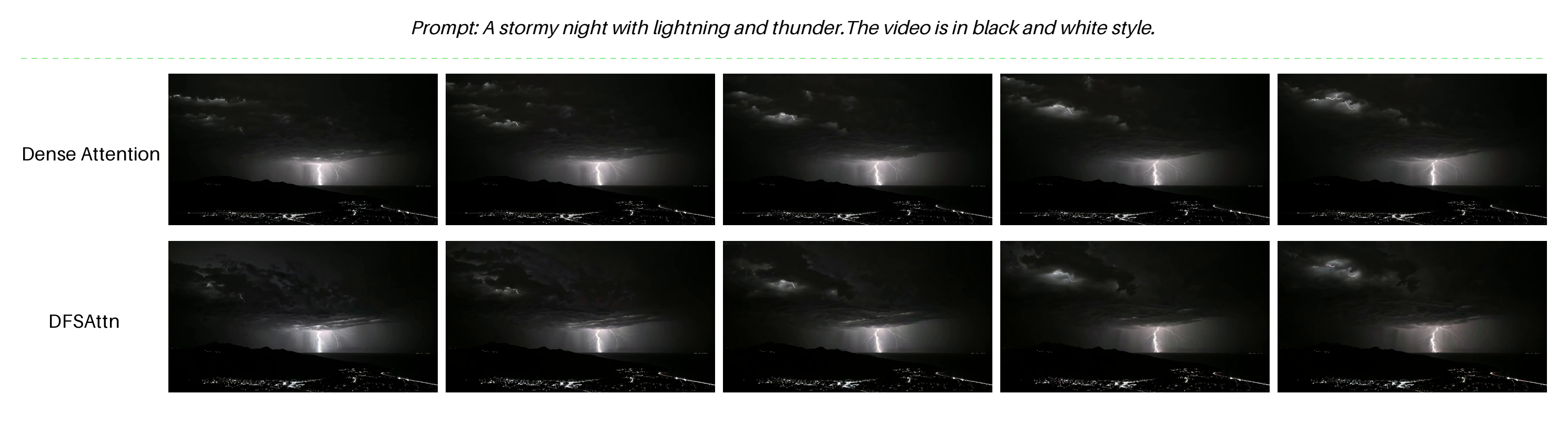}
    \end{subfigure}\\
    \begin{subfigure}{\linewidth}
        \includegraphics[width=0.9\linewidth]{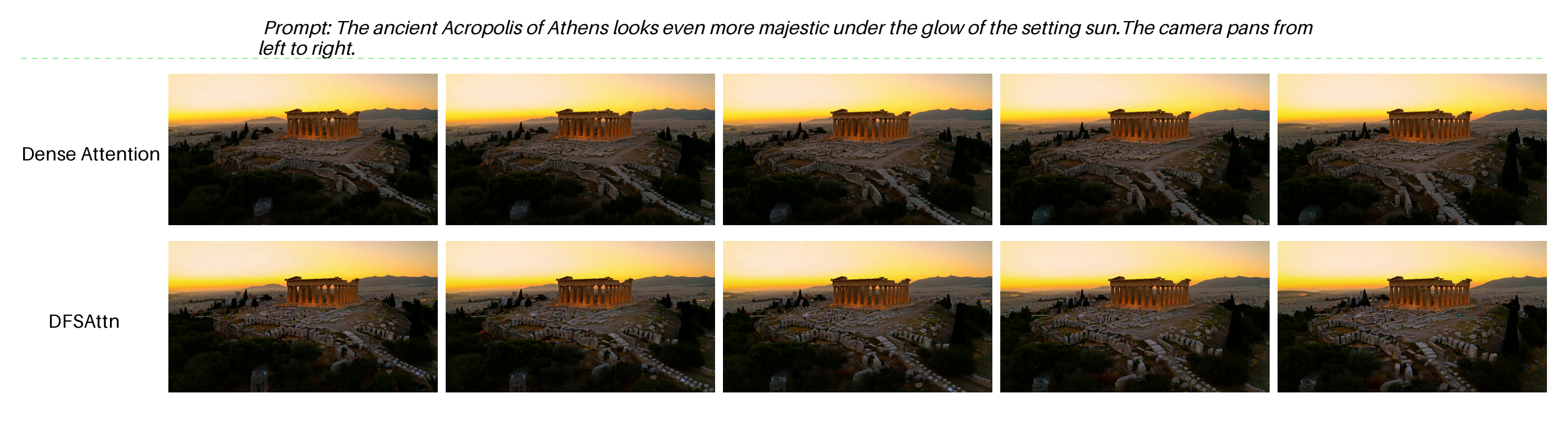}
    \end{subfigure}\\
    \caption{Video generation examples from dense attention and DFSAttn on Wan2.1-T2V-14B.}
    \label{fig:videos_by_wan}
\end{figure}


\end{document}